\newcommand{\longversion}[1]{#1}
\newcommand{\shortversion}[1]{}
\newcommand{\RN}[1]{
  \textup{\uppercase\expandafter{\romannumeral#1}}
}
\newtheorem{theorem}{Theorem}
\newtheorem{definition}{Definition}
\newcommand{\comment}[1]  {\noindent \textcolor{red}{{\bf IS: }{}}}
\title{\LARGE \bf
Optimal Integrated Task and Path Planning and Its Application to Multi-Robot Pickup and Delivery
}
\author{Aman Aryan$^{1}$, Manan Modi$^{2}$, Indranil Saha$^3$, Rupak Majumdar$^4$ and Swarup Mohalik$^5$
\thanks{$^{1}$ Aman Aryan is with IIT Kanpur, India. {\tt\small aman.aryan0@gmail.com}}
\thanks{$^{2}$ Manan Modi is with Jupiter Money, India. {\tt\small  modimanann@gmail.com}}
\thanks{$^{3}$ Indranil Saha is with IIT Kanpur, India. {\tt\small isaha@cse.iitk.ac.in}}
\thanks{$^{4}$ Rupak Majumdar is with MPI-SWS, Germany. {\tt\small  rupak@mpi-sws.org}}
\thanks{$^{5}$ Swarup Mohalik is with Ericson Research, India.\  {\tt\small  swarup.kumar.mohalik@ericsson.com}}
}
\begin{document}

\maketitle
\thispagestyle{empty}
\pagestyle{empty}

\begin{abstract}
We propose a generic multi-robot planning mechanism that combines an optimal task planner and an optimal path planner to provide a scalable solution for complex multi-robot planning problems.
The Integrated planner, through the interaction of the task planner and the path planner, produces optimal collision-free trajectories for the robots.
We illustrate our general algorithm on an object pick-and-drop planning problem in a warehouse scenario where a group of robots is entrusted with moving objects from one location to another in the workspace. 
We solve the task planning problem by reducing it into an SMT-solving problem and employing the highly advanced \emph{SMT solver Z3} to solve it.
To generate collision-free movement of the robots, we extend the state-of-the-art algorithm \emph{Conflict Based Search with Precedence Constraints}  with several domain-specific constraints.
We evaluate our integrated task and path planner extensively on various instances of the object pick-and-drop planning problem and compare its performance  with a state-of-the-art multi-robot classical planner. 
Experimental results demonstrate that our planning mechanism can deal with complex planning problems and outperforms a state-of-the-art classical planner both in terms of computation time and the quality of the generated plan.
\end{abstract}

\section{Introduction}
\label{sec-intro}

A major component of the software controlling a robotic system is a \emph{planner} that guides the robots to safely move through their workspace and perform the designated tasks appropriately. 
A planner for an application involving
mobile robots needs to have two components: a \emph{task planner} that decides which tasks should be performed by which robots and in what order, and a \emph{path planner} that provides the collision-free trajectories to be followed by the robots to reach the locations to perform the tasks. The task planning and the path planning problems cannot be addressed entirely independently as the assignment of a task to a robot is directly related to the amount of effort the robot needs to invest in reaching the task locations. 

Consider a multi-robot application where a group of mobile robots is entrusted with the responsibility of delivering objects from one location to another in a workspace. The task assignment to the robots depends on the time required to traverse 
the distance between the initial locations of the robots and various task locations and the distance between the task locations when a robot has to perform multiple tasks. 
The traverse time 
between different locations depends on the collision-free optimal trajectories of the robots, which can only be obtained from a multi-robot path planner.

Two different approaches are, in general, employed to solve a multi-robot planning problem offline for a static environment. 
In the first approach, the multi-robot task assignment  and the path planning problems are formulated and solved as a monolithic problem  (e.g.,~\cite{CrosbyRP13,saha2014,honig2018conflict}).
In the second approach, the task assignment problem is solved based on a heuristic to measure the trajectory lengths approximately (e.g.,~\cite{GavranMS17,saha2022costs, turpin2013trajectory}).
As the task assignment is not carried out based on collision-free trajectories, a local collision avoidance strategy (e.g.~\cite{HennesCMT12}) is employed during the execution of the plan. 
The shortcoming of the first approach is that it either fails to provide a multi-robot trajectory with a guarantee on its optimality~\cite{CrosbyRP13}, or the algorithm that can produce an optimal plan takes a prohibitively large amount of time to compute the collision-free trajectories~\cite{saha2014,honig2018conflict}.
The second approach can find a plan quickly, but the generated plans are guaranteed to be neither collision-free nor optimal.

To bridge this gap, we design a scalable algorithm to generate optimal collision-free trajectories for multi-robot systems. 
The proposed algorithm works as follows. 
It first estimates the lengths of the trajectories between all locations of interest through which a robot may need to move. 
Based on the estimated trajectory lengths, the task planner generates a task assignment corresponding to optimal trajectories for the robots based on the estimated length of the trajectories between any two locations. 
The outcome of the task assignment is a sequence of locations to be visited by all the robots. 
In the second step, we generate collision-free trajectories for the robots to reach their designated locations in sequence by means of an optimal multi-robot path planner. 
If the cost of the trajectories obtained in this step is more than that of the trajectories obtained during task assignment, we look for another same-cost or a sub-optimal task assignment for which the cost of the collision-free trajectories obtained by solving the multi-robot path planning problem may be better than the collision-free trajectories obtained in the previous step. 
In this way, we alternate between the task planner and the path planner until we find a task assignment with optimal-cost collision-free trajectories.

We illustrate our general algorithm on an offline multi-agent pick-and-drop planning problem in a warehouse scenario where a group of robots 
move objects from one location to another in the workspace. Our problem statement is similar to \cite{liu2019task} except that we have defined a designated base location for robots to return after finishing the tasks. We transform the task-planning problem into an SMT-solving problem that incorporates many application-specific operational constraints and solve it using the Z3~\cite{Z3_Moura} solver. Additionally, we employ the existing optimal multi-robot path planning algorithm $\textsf{MLA*-CBS-PC}$~\cite{zhang2022multi} to accommodate the sequential goal locations for each robot, thereby serving as the optimal path planner. 

We have evaluated our algorithm extensively on various instances of the object pick-and-drop planning problem and compared the performance of our planner with a state-of-the-art multi-robot classical planner. 
Experimental results demonstrate that our planning mechanism can deal with complex planning problems and outperform the state-of-the-art classical planner ENHSP in terms of computation time and quality of the generated plan.

\longversion{
In summary, we make the following contributions. 
\begin{itemize}
     \item We provide a general multi-robot planning algorithm that induces an interaction between the task planner and the path planner to generate optimal collision-free trajectories for the robots to enable them to complete the mission successfully (Section~\ref{sec-algorithm}).
    \item We provide an SMT-based task planner for object pick-and-drop applications in a warehouse scenario. Our task planner is general enough to be able to incorporate many application-specific operational constraints (Section~\ref{sec-taskplanning}).
    \item We adapt the state-of-the-art graph-based multi-robot path planner $\textsf{MLA*-CBS-PC}$ \cite{zhang2022multi} to deal with a sequence of goal locations for each robot (Section~\ref{sec-motionplanning}) using plans generated from our task planner.
    \item We demonstrate the overall algorithm for multi-agent pickup and delivery application on predefined as well as randomly generated maps for various scenarios and compare it to the state-of-the-art classical planner ENHSP.
\end{itemize}
}

\section{Problem}
\label{sec-problem}

In this section, we define our problem formally and illustrate it with an example.

\subsection{Preliminaries}
        
\subsubsection{Workspace} The workspace, denoted by $\mathcal{W}$, is represented as a 2-D rectangular grid. 
We assume that the robots, as well as the task objects, occupy one grid block each at any time instance. 
Obstacles may occupy some of these grid blocks and thus cannot be used by the robots, tasks, or movement. Formally, the workspace is represented by a tuple $\langle L_X, L_Y, \Omega \rangle$, where $L_X$ and $L_Y$ denote the length and the width of the workspace, and $\Omega$ denotes the set of grid blocks that are occupied by obstacles.
        
\subsubsection{Robots} 
The set of robots is denoted by $\mathcal{R}$.
Each robot $r_i \in \mathcal{R}$ is defined as a tuple $\langle s_i, \Gamma_i, \Lambda_i, attributes_i \rangle$. The symbol $s_i$ denotes the start location of robot $r_i$. The symbols $\Gamma_i$ and $\Lambda_i$ denote the set of \emph{motion primitives} and \emph{action primitives} for robot $r_i$, respectively. 
To keep the exposition simple, we assume that each robot has five basic motion primitives: \textsf{move up}, \textsf{move down}, \textsf{move left}, \textsf{move right}, and \textsf{stay}. However, our methodology seamlessly applies to any complex set of motion primitives for a robot. The action primitives for a robot are application-specific. 
For example, for a pick-and-drop application, the robot has action primitives for \emph{picking up} and \emph{dropping off} an object. 
We assume that all of these primitives take a one-time step regardless of the robot's direction. 
Moreover, the motion and action primitives are deterministic, i.e., the application of a primitive to a robot in a state moves the robot to a unique next state. 
We denote by $attributes_i$ a set of attributes of robot $r_i$ that may be required depending upon the nature of the problem. For example, in a pick-and-drop example, an attribute for a robot could be the number of objects or the total amount of weight the robot can carry at once.

\subsubsection{Tasks} The set of tasks associated with a problem is denoted by $\mathcal{T}$. A task $t_i \in \mathcal{T}$ is defined as a tuple $\langle L_i, attributes_i \rangle$. Here, $L_i$ is a sequence of locations that need to be visited by a robot in the same order to complete the task. We denote by $attributes_i$ a set of attributes of the task that may be required for planning depending upon the nature of the problem. For example, a task $t_i$ may be associated with a deadline $d_i$; in that case, the last location in $L_i$ must be visited before $d_i$.
  
\subsubsection{Plan and Trajectory} 
We capture the behaviour of a robot in the workspace as a sequence of states. The state of robot $r_i$ at time step $t$ is denoted by $\sigma_i(t)$. 
Given a state $\sigma$ and a motion or action primitive $\nu$, the robot's next state $\sigma'$ is given by $\mathtt{next}(\sigma, \nu)$.

\begin{definition}[Plan] 
The plan for a robot $r_i$ is the sequence of motion and action primitives executed by the robot.
\end{definition}

\begin{definition}[Trajectory]
For robot $r_i$ with plan $\nu_i = (\nu_i(1), \nu_i(2), \ldots \nu_i(T_i))$, the trajectory is given by $\sigma_i = (\sigma_i(0), \sigma_i(1), \ldots, \sigma_i(T_i))$, where ${\sigma_i(0) = s_i}$ and for all ${i \in \{1, \ldots, T_i\}}$. \  {$\sigma_i(j) = {\mathtt{next}(\sigma_i(j-1), \nu_i(j))}$}. The symbol $T_i$ denotes the length of the plan $\nu_i$ and the trajectory $\sigma_i$. The trajectory of the multi-robot system $\mathcal{R} = \{r_1, \ldots, r_n\}$ is denoted by $\Sigma = [\sigma_1, \sigma_2, \ldots, \sigma_n]$, where $\sigma_i$ denotes the trajectory of robot $r_i$.
\end{definition}
        
\subsubsection{Optimality Criteria for a Trajectory}
The cost of executing a trajectory $\sigma_i = (\sigma_i(0), \sigma_i(1), \ldots, \sigma_i(T_i))$ is equal to its length $T_i$.
Now, the quality of a multi-robot trajectory $\Sigma$ is captured by one of the following two attributes. 

\begin{definition}[Makespan]
The makespan of the trajectories $\Sigma = [\sigma_1, \sigma_2, \ldots, \sigma_n]$ is given by
        $C = \max\limits_i T_i$.
\end{definition}

\begin{definition}[Total cost]
The total cost of the trajectories $\Sigma = [\sigma_1, \sigma_2, \ldots, \sigma_n]$ is given by
        $C = \sum\limits_i T_i$.
\end{definition}
   
Note that the makespan and total cost are equal for a single robot system. We will use the terms \emph{plan} and \emph{trajectory} interchangeably to denote the solution from our algorithm.

\subsection{Problem Definition}
\label{sec-prob}

Here, we provide the formal definition of the problem.

\noindent
\begin{definition}[Problem]
Given a workspace $\mathcal{W}$, a set of tasks $\mathcal{T}$, and a set of robots $\mathcal{R}$, find optimal makespan or optimal total cost collision-free trajectories $\Sigma$ for the robots such that all tasks are completed while also ensuring that the robots return to their initial positions.
\end{definition}


\begin{figure}[t]
\centering
{
    \label{fig:example1}
    \includegraphics[height=3cm, width=3cm]{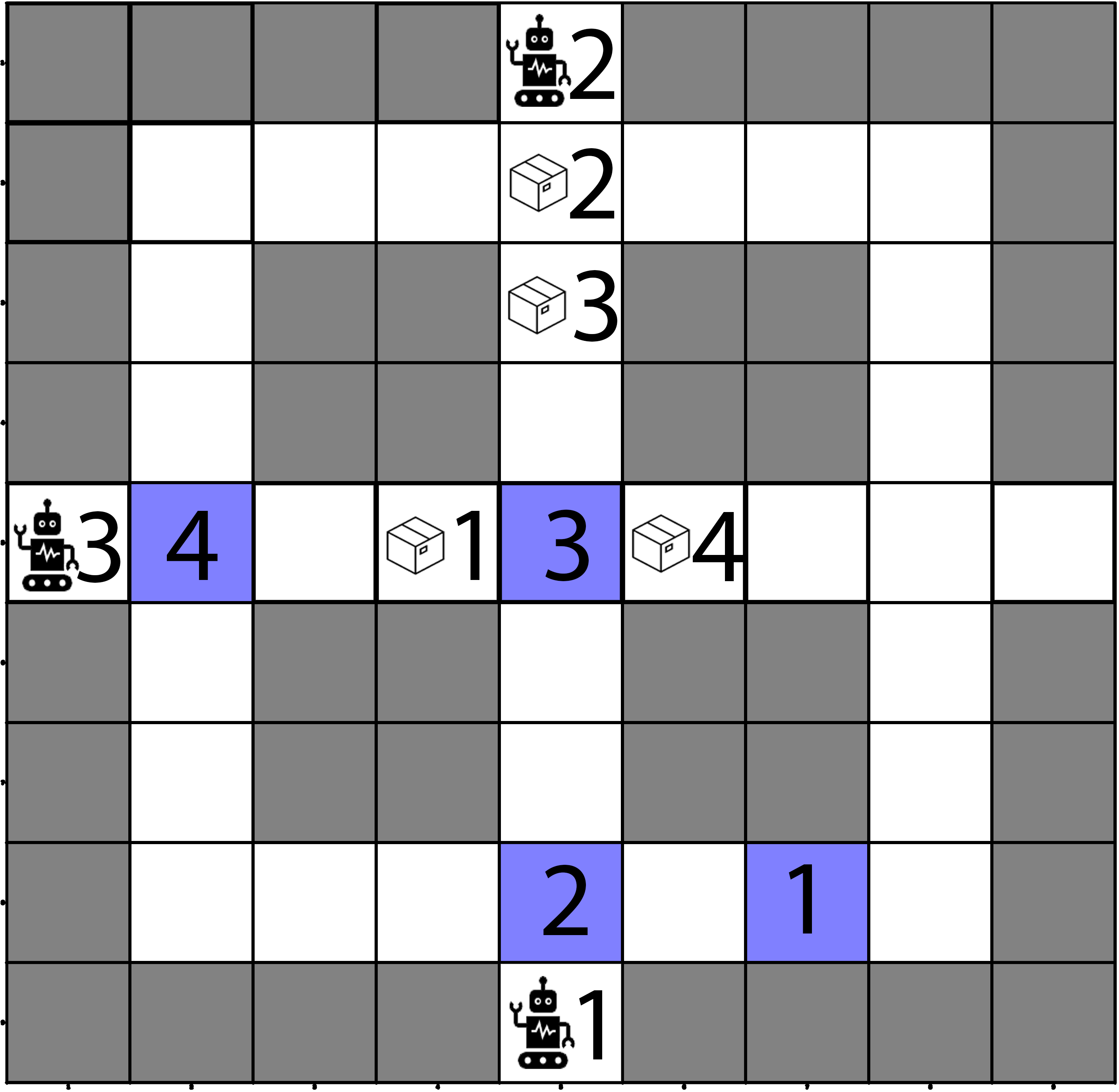}
} 
\qquad
{
    \label{fig:example2}
    \includegraphics[height=3cm, width=3cm]{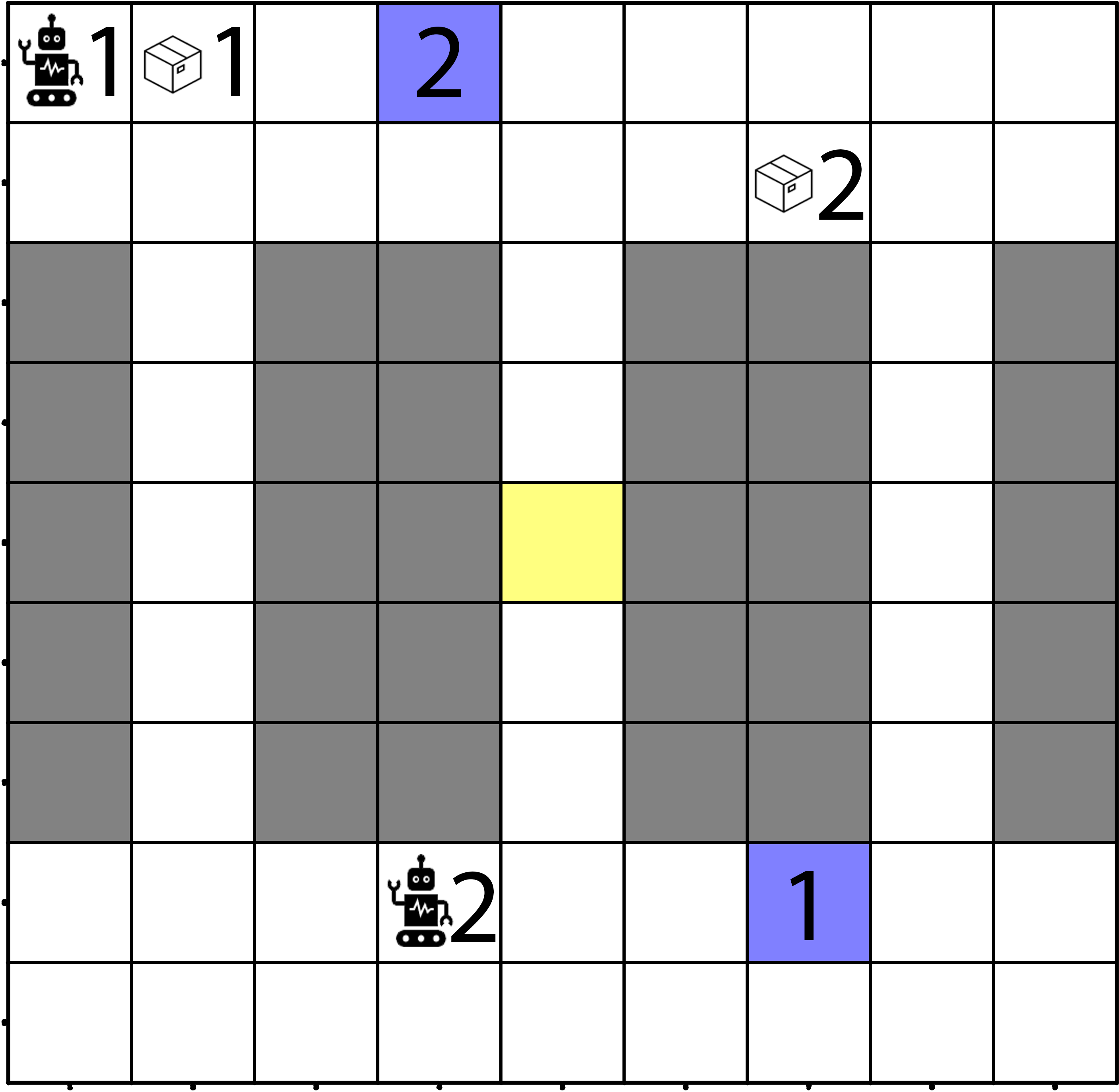}
}
\caption{Examples of workspaces showing warehouse scenarios a) without Intermediate Location, b) with an Intermediate Location.}
\label{fig:example}
\label{figurelabel}
\end{figure}  
   
   \noindent
   \textbf{Example.}
   Consider the workspaces shown in Figure~\ref{fig:example}. They represent typical warehouse scenarios. 
   Boxes in the images denote the pickup locations for these objects. The blue grid locations denote their drop locations. The grey-coloured grid blocks are occupied by obstacles and must be avoided. In the figure, the robots are shown in their initial locations. The robots can carry multiple objects at a time. To pick up an object, a robot needs to be in the grid block where the object is placed. The same is true for dropping an object. The problem is to find the task assignment to the robots to decide which robot should carry which object to its goal location and the collision-free trajectories for the robots to carry out their tasks successfully.
   
In Figure~\ref{fig:example}a, there are $4$ objects that need to be moved to some specific goal locations. Three robots $r_1$, $r_2$ and $r_3$ have 
to move the four objects from their current locations to their goal locations.
In Figure~\ref{fig:example}b, the yellow block denotes the intermediate drop block. A robot can drop an object on the yellow block, and the object can be picked up from there by another robot. Thus, having an intermediate block allows the robots to collaborate on delivering a specific object. 
In the scenario presented in Figure~\ref{fig:example2}, let us attempt to find the  plan with optimal makespan. The collision-free plan without the intermediate drop would be $r_1$ completing task $t_2$ and returning to its base location in 16 steps and $r_2$ completing $t_1$ and returning to its base location in 26 steps. So the makespan of this plan becomes 26.
\longversion{
The collision-free trajectory for the two robots $r_1$ and $r_2$ are shown in Figure~\ref{fig:traj1}.
\begin{figure}[t]
\centering
\begin{tabular}{cll}
\toprule
time & \ \ \ \ \ \ \ \ $r_1$ & \ \ \ \ \ \ \ \ $r_2$\\
\midrule
    0 & ($Start$, (0, 0)) & ($Start$, (7, 3)) \\
    1 & ($Move$, (1, 0)) &($Move$, (7, 2)) \\
    2 & ($Move$, (1, 1)) & ($Move$, (7, 1)) \\
    3 & ($Move$, (1, 2)) & ($Move$, (6, 1)) \\
    4 & ($Move$, (1, 3)) & ($Move$, (5, 1)) \\
    5 & ($Move$, (1, 4)) & ($Move$, (4, 1)) \\
    6 & ($Move$, (1, 5)) & ($Move$, (3, 1)) \\
    7 & ($Move$, (1, 6)) & ($Move$, (2, 1)) \\ 
    8 & ($Pick_2$, (1, 6)) & ($Move$, (1, 1)) \\
    9 & ($Move$, (0, 6)) & ($Move$, (0, 1)) \\
    10 & ($Move$, (0, 5)) & ($Pick_1$, (0, 1)) \\
    11 & ($Move$, (0, 4)) & ($Move$, (1, 1)) \\
    12 & ($Move$, (0, 3)) & ($Move$, (2, 1)) \\
    13 & ($Drop_2$, (0, 3)) & ($Move$, (3, 1)) \\
    14 & ($Move$, (0, 2)) & ($Move$, (4, 1)) \\
    15 & ($Move$, (0, 1)) & ($Move$, (5, 1)) \\
    16 & ($Return$, (0, 0)) & ($Move$, (6, 1)) \\
    17 & ($---$, (0, 0)) & ($Move$, (7, 1)) \\
    18 & ($---$, (0, 0)) & ($Move$, (7, 2)) \\
    19 & ($---$, (0, 0)) & ($Move$, (7, 3)) \\
    20 & ($---$, (0, 0)) & ($Move$, (7, 4)) \\ 
    21 & ($---$, (0, 0)) & ($Move$, (7, 5)) \\
    22 & ($---$, (0, 0)) & ($Move$, (7, 6)) \\
    23 & ($---$, (0, 0)) & ($Drop_1$, (7, 6)) \\
    24 & ($---$, (0, 0)) & ($Move$, (7, 5)) \\
    25 & ($---$, (0, 0)) & ($Move$, (7, 4)) \\ 
    26 & ($---$, (0, 0)) & ($Return$, (7, 3)) \\
    \bottomrule
\end{tabular}
\caption{Trajectories of the two robots for the problem shown in Figure~\ref{fig:example}(a)}
\label{fig:traj1}
\end{figure}
}
If we allow the robots to use the intermediate block for object transfer, $r_1$ can pick up $t_1$ and drop it to the intermediate block; then it can continue to pick and drop $t_2$ and return to its base location in 24 steps. But this reduces the time taken by $r_2$ to process $t_1$. Now, $r_2$ can pick up $t_1$ from the intermediate location, drop it to its drop location, and come back to its base station. Execution of this plan takes 21 steps to complete, thus making the overall makespan 24.  
\longversion{Since we optimize the makespan, the total cost metric may increase. In this scenario, the total cost increases from 42 to 45. The trajectories for both of the robots are shown in Figure~\ref{fig:traj2}.
\begin{figure}[t]
\centering
    \begin{tabular}{cll}
    \toprule
    time & \ \ \ \ \ \ \ \ $r_1$ & \ \ \ \ \ \ \ \ $r_2$\\
    \midrule
    0 & ($Start$, (0, 0)) & ($Start$, (7, 3)) \\
    1 & ($Move$, (0, 1)) & ($Move$, (7, 4)) \\
    2 & ($Pick_1$, (0, 1)) & ($Move$, (6, 4)) \\
    3 & ($Move$, (0, 2)) & ($Move$, (5, 4)) \\
    4 & ($Move$, (1, 2)) & ($Move$, (4, 4)) \\
    5 & ($Move$, (1, 3)) & ($Move$, (4, 4)) \\
    6 & ($Move$, (1, 4)) & ($Move$, (4, 4)) \\
    7 & ($Move$, (2, 4)) &  ($Move$, (4, 4)) \\
    8 & ($Move$, (3, 4)) & ($Move$, (4, 4)) \\
    9 & ($Move$, (4, 4)) & ($Move$, (5, 4)) \\
    10 &  ($InterDrop_1$, (4, 4)) & ($Move$, (5, 4)) \\
    11 &  ($Move$, (3, 4)) &($Move$, (4, 4)) \\
    12 &  ($Move$, (2, 4)) & ($InterPick_1$, (4, 4)) \\
    13 & ($Move$, (1, 4)) & ($Move$, (5, 4)) \\
    14 & ($Move$, (1, 5)) & ($Move$, (6, 4)) \\
    15 & ($Move$, (1, 6)) & ($Move$, (7, 4)) \\
    16 & ($Pick_2$, (1, 6)) & ($Move$, (7, 5)) \\
    17 & ($Move$, (0, 6)) & ($Move$, (7, 6)) \\
    18 & ($Move$, (0, 5)) & ($Drop_1$, (7, 6)) \\
    19 & ($Move$, (0, 4)) & ($Move$, (7, 5)) \\
    20 & ($Move$, (0, 3)) & ($Move$, (7, 4)) \\
    21 & ($Drop_2$, (0, 3)) & ($Return$, (7, 3)) \\
    22 & ($Move$, (0, 2)) & ($---$, (7, 3)) \\
    23 & ($Move$, (0, 1)) & ($---$, (7, 3)) \\
    24 & ($Return$, (0, 0)) & ($---$, (7, 3)) \\
    \bottomrule
\end{tabular}
\caption{Trajectories of the two robots for the problem shown in Figure~\ref{fig:example}(b)}
\label{fig:traj2}
\end{figure}

}
\shortversion{The collision-free trajectories for the two robots with and without intermediate location are provided in Section~\ref{sec-prob} of~\cite{optITMPjournal}. }

Thus, intermediate locations help in finding a better plan for our optimization criteria, and our goal would be to design a planner that can efficiently exploit the availability of such opportunities.
\section{Integrated Task and Path Planning Algorithm}
\label{sec-algorithm}

In this section, we provide an algorithm to solve the problem described in Section~\ref{sec-problem}. One could reduce the problem to an Integer-Linear Programming or an SMT-solving problem and generate a solution for the task assignment as well as the trajectories for the robots. However, this monolithic approach rarely scales up with the number of robots, the number of tasks, and the size of the workspace. Instead, we embrace a decoupled approach where the task and the path planning problems are solved independently. However, through an interaction between the task and the path planner, we ensure that the finally generated plans satisfy the task completion requirement and that the corresponding paths are collision-free and optimal.

Our proposed methodology is shown in Algorithm~\ref{alg:Algorithm1}.
We advocate the use of an SMT solver to solve complex task assignment problems. The procedure \textsc{task\_planner} takes  $\mathcal{W}$, $\mathcal{R}$, $\mathcal{T}$, a set $\mathcal{A}$ of forbidden task assignments, a lower bound $l\_b$, and an upper bound $u\_b$ as inputs. It produces as output a task assignment $\mathcal{L} = [\mathcal{L}_1, \mathcal{L}_2, \ldots, \mathcal{L}_{|\mathcal{R}|}]$, with minimum total cost or makespan within specified bounds. It returns $\emptyset$ if there does not exist a feasible task assignment within the bounds. Here, $\mathcal{L}_i$ denotes the sequence of locations that robot $r_i$ must visit. In Section~\ref{sec-taskplanning}, we will present the details of the task planner with an example of a warehouse pick-and-drop application.

The following procedure \textsc{path\_planner} takes the task assignment $\mathcal{L}$ produced by the \textsc{task\_planner} procedure and generates \emph{optimal} and \emph{collision-free} trajectories. The procedure also returns the trajectory's total cost or makespan depending upon the optimization criterion. In Section~\ref{sec-motionplanning}, we will present the details of the path planner.

The main procedure \textsc{integrated\_planner} induces an interaction between the task planner and the path planner to find the optimal collision-free trajectories for the robots. 
There could be several task assignments with the same cost. Thus, once a task assignment $\mathcal{L}$ is produced by the task planner, we need to ensure that the task planner does not generate the same task assignment again. We use the set $\mathcal{A}$ for this purpose. We keep on storing the task assignments with the same cost in $\mathcal{A}$ and provide it as the set of prohibited assignments while invoking the task planner with the same lower bound of the cost.
This set is initialized as an empty set.
We initialize $cur\_task\_cost$ (denoting the cost of the current task assignment) as 0 and $opt\_plan\_cost$ (denoting the minimum cost of the collision-free paths for any assignment) as $\infty$ and repeat the procedure below until $cur\_task\_cost$ becomes equal to $opt\_plan\_cost$. We invoke the \textsc{task\_planner} with the $cur\_task\_cost$ as lower bound and $opt\_plan\_cost$ as upper bound to get the best task assignment $\mathcal{L}$ with cost $task\_cost$ based on some heuristic cost of movements between important locations. If the task planner cannot produce a plan (returns $\emptyset$), we terminate the loop. Otherwise, for this task assignment $\mathcal{L}$, we invoke the \textsc{path\_planner}, which outputs the collision-free trajectory with cost $plan\_cost$. If we find that the new task assignment $\mathcal{L}$ has a higher cost compared to $cur\_task\_cost$, then we update $cur\_task\_cost$ with $task\_cost$ and reset the exclusion's list $\mathcal{A}$. We add this task assignment $\mathcal{L}$ to the $\mathcal{A}$. We update the $opt\_plan$ and $opt\_plan\_cost$ if the current trajectory has a better cost.

\begin{algorithm}[t]
\caption{Integrated Planner using Task and Path Planner}
\label{alg:Algorithm1}
\begin{algorithmic}[1]
\Procedure {task\_planner }{$\mathcal{W}$, $\mathcal{R}$, $\mathcal{T}$, $\mathcal{A}$, $l\_b$, $u\_b$}
\State// find optimal task assignments using a heuristic cost for movements.
\State $\mathtt{return}\ \ \langle \mathcal{L}, task\_cost \rangle$
\EndProcedure
\medskip
\Procedure {path\_planner }{$\mathcal{W}$, $\mathcal{R}$, $\mathcal{L}$}
\State // find the optimal collision-free trajectories for robots following the given task assignments in L.
\State $\mathtt{return} \ \langle plan, plan\_cost \rangle$
\EndProcedure

\medskip
\Procedure {integrated\_planner }{$\mathcal{W}$, $\mathcal{R}$, $\mathcal{T}$}
\State $cur\_task\_cost \gets 0$; $opt\_plan\_cost \gets \infty$
\State $opt\_plan \gets \emptyset$; $\mathcal{A} \gets \emptyset$
\While{$cur\_task\_cost < opt\_plan\_cost$}

\State $\langle \mathcal{L}, task\_cost \rangle \gets \textsc{task\_planner}$ ($\mathcal{W}$, $\mathcal{R}$, $\mathcal{T}$, $\mathcal{A}$, $cur\_task\_cost$, $opt\_plan\_cost$)
\If {$\mathcal{L}$ == $\emptyset$}
\State {$break$}
\EndIf
\State $\langle plan, plan\_cost \rangle \gets \textsc{path\_planner}$ ($\mathcal{W}$, $\mathcal{R}$, $\mathcal{L}$)

\If {($cur\_task\_cost < task\_cost$)}
\State $cur\_task\_cost \gets task\_cost$
\State $\mathcal{A} \gets \emptyset$
\EndIf
\State $\mathcal{A} \gets \mathcal{A} \cup \{\mathcal{L}\}$

\If {($plan\_cost < opt\_plan\_cost$)}
\State $opt\_plan \gets plan$
\State $opt\_plan\_cost \gets plan\_cost$
\EndIf

\EndWhile
\State $\mathtt{return}$ $\langle opt\_plan, opt\_plan\_cost \rangle$ 
\EndProcedure
\end{algorithmic}
\end{algorithm}

Now, we formally prove that Algorithm~\ref{alg:Algorithm1} produces the optimal trajectories satisfying the task requirements.

\begin{theorem}[Optimality]
\label{th:th1}
There does not exist a task assignment for which the  cost of the collision-free trajectories would be less than the cost of the trajectories returned by Algorithm~\ref{alg:Algorithm1}. 
\end{theorem}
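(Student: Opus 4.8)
The plan is to reduce the optimality claim to two ingredients: an \emph{admissibility} property of the heuristic used inside \textsc{task\_planner}, and a loop invariant for \textsc{integrated\_planner} stating that every ``cheap'' task assignment has already been accounted for. First I would fix notation. For a feasible task assignment $\mathcal{L}$ let $h(\mathcal{L})$ be the heuristic cost returned by \textsc{task\_planner} (built from the estimated, e.g.\ single-robot shortest-path, distances between the relevant locations), and let $\mathrm{cf}(\mathcal{L})$ be the cost of the optimal collision-free trajectory realizing $\mathcal{L}$, i.e.\ what \textsc{path\_planner} outputs (set $\mathrm{cf}(\mathcal{L})=\infty$ if no such trajectory exists). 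Since adding inter-robot collision-avoidance constraints can only lengthen the individual trajectories, we have $h(\mathcal{L})\le \mathrm{cf}(\mathcal{L})$ for every $\mathcal{L}$ --- this is exactly the ``plan cost $\ge$ task cost'' inequality that motivates the algorithm. I would also record the contract of \textsc{task\_planner}: on input $(\mathcal{A},l\_b,u\_b)$ it returns a feasible $\mathcal{L}\notin\mathcal{A}$ minimizing $h(\mathcal{L})$ subject to $l\_b\le h(\mathcal{L})<u\_b$, and $\emptyset$ when none exists; and that $cur\_task\_cost$ is non-decreasing while $opt\_plan\_cost$ is non-increasing over the run.

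The core of the proof is the invariant, maintained at the top of the \textbf{while} loop: for every feasible task assignment $\mathcal{L}'$ with $h(\mathcal{L}')<cur\_task\_cost$ we have $opt\_plan\_cost\le \mathrm{cf}(\mathcal{L}')$. I would establish it by induction on iterations, together with the auxiliary fact that between two consecutive increases of $cur\_task\_cost$ the set $\mathcal{A}$ collects only assignments of heuristic cost exactly $cur\_task\_cost$, each of which was passed to \textsc{path\_planner} immediately before being inserted (so that $opt\_plan\_cost$ was pushed down to at most its $\mathrm{cf}$-value). The base case is vacuous ($cur\_task\_cost=0$). For the step, the only nontrivial situation is when $cur\_task\_cost$ jumps from $v$ to $v'=task\_cost$: a new $\mathcal{L}'$ becomes relevant precisely when $v\le h(\mathcal{L}')<v'$. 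Because \textsc{task\_planner} returned an assignment of cost $v'>h(\mathcal{L}')$ as the \emph{cheapest} eligible one of cost $\ge v$, such an $\mathcal{L}'$ must have been ineligible: either $h(\mathcal{L}')\ge opt\_plan\_cost$, in which case $\mathrm{cf}(\mathcal{L}')\ge h(\mathcal{L}')\ge opt\_plan\_cost$ already, or $\mathcal{L}'\in\mathcal{A}$, in which case the auxiliary fact gives $opt\_plan\_cost\le \mathrm{cf}(\mathcal{L}')$; monotonicity of $opt\_plan\_cost$ then preserves the inequality. The same monotonicity plus finiteness of the set of task assignments (and the bound $cur\_task\_cost<opt\_plan\_cost$ once a first trajectory is found) yields termination, so the returned value $C^{\mathrm{ret}}=opt\_plan\_cost$ is well defined.

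Finally I would assemble the pieces. When the loop exits there are two cases: $cur\_task\_cost\ge opt\_plan\_cost=C^{\mathrm{ret}}$, or \textsc{task\_planner} returned $\emptyset$. In the first case, any feasible $\mathcal{L}^\ast$ either has $h(\mathcal{L}^\ast)<cur\_task\_cost$, whence the invariant gives $\mathrm{cf}(\mathcal{L}^\ast)\ge opt\_plan\_cost=C^{\mathrm{ret}}$, or has $h(\mathcal{L}^\ast)\ge cur\_task\_cost\ge C^{\mathrm{ret}}$, whence admissibility gives $\mathrm{cf}(\mathcal{L}^\ast)\ge h(\mathcal{L}^\ast)\ge C^{\mathrm{ret}}$. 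In the second case $\emptyset$ means no feasible $\mathcal{L}^\ast\notin\mathcal{A}$ has $cur\_task\_cost\le h(\mathcal{L}^\ast)<C^{\mathrm{ret}}$, so every feasible $\mathcal{L}^\ast$ falls into one of: $h(\mathcal{L}^\ast)<cur\_task\_cost$ (invariant), $h(\mathcal{L}^\ast)\ge C^{\mathrm{ret}}$ (admissibility), or $\mathcal{L}^\ast\in\mathcal{A}$ (processed, hence $opt\_plan\_cost\le \mathrm{cf}(\mathcal{L}^\ast)$) --- in all of them $\mathrm{cf}(\mathcal{L}^\ast)\ge C^{\mathrm{ret}}$. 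Since \textsc{path\_planner} actually exhibits a collision-free trajectory of cost $C^{\mathrm{ret}}$ for the recorded assignment, no task assignment admits cheaper collision-free trajectories, which is the theorem.

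The step I expect to be the main obstacle is the bookkeeping around $\mathcal{A}$ and the inclusive/exclusive treatment of $l\_b$ and $u\_b$: the invariant must be strong enough to be inductive yet remain true when $opt\_plan\_cost$ shrinks in the middle of a phase (which can both reset eligibility of some assignments and never invalidates the $\mathrm{cf}$-bound already recorded). Everything else is routine given admissibility of $h$.
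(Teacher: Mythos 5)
Your proof is correct, and it rests on the same two ingredients as the paper's: admissibility of the heuristic ($h(\mathcal{L})\le\mathrm{cf}(\mathcal{L})$) and the fact that every assignment whose heuristic cost is below the incumbent collision-free cost gets explored before the algorithm stops. The difference is structural. The paper argues by contradiction: it posits a cheaper, unconsidered assignment $\mathcal{L}'$, splits on whether $C_h'<C_h$ or $C_h\le C_h'$, and in the second case simply asserts that ``the Integrated Planner explores all task assignments with heuristic costs less than $C$.'' Your loop invariant --- every feasible $\mathcal{L}'$ with $h(\mathcal{L}')<cur\_task\_cost$ already satisfies $opt\_plan\_cost\le\mathrm{cf}(\mathcal{L}')$ --- proved by induction together with the auxiliary fact that $\mathcal{A}$ only ever holds already-path-planned assignments of the current heuristic cost, is precisely a proof of that asserted exploration property rather than an appeal to it. You also handle details the paper leaves implicit: monotonicity of $opt\_plan\_cost$, the reset of the exclusion set when $cur\_task\_cost$ increases, termination via finiteness of the assignment set, and, importantly, the exit through the $\emptyset$ branch of \textsc{task\_planner}, which the paper's two-case contradiction never addresses. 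What the paper's version buys is brevity; what yours buys is that the crucial exploration claim is actually established and both exit conditions are covered, at the cost of the inclusive/exclusive boundary bookkeeping you flag yourself, which must merely be kept consistent with the contract of the task planner (Algorithm~\ref{alg:Algorithm2}) and does not change the conclusion either way.
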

\begin{proof} 
Let us assume that Algorithm~\ref{alg:Algorithm1} returns collision-free trajectories for the robots with cost $C$ for a task assignment $\mathcal{L}$. The heuristic cost for the assignment is $C_h$. Now, let us assume that there exists a task assignment $\mathcal{L'}$ for which the cost of the collision-free trajectories is $C'$ where $C'< C$, but this task assignment was not considered by Algorithm~\ref{alg:Algorithm1}. The heuristic cost for the assignment $\mathcal{L'}$ is $C'_h$. As heuristic cost must always be a lower bound for the cost of the collision-free trajectories, $C_h \leq C$ and $C_h' \leq C'$. Then either (I)~$C_h' < C_h$ or (II)~$C_h \leq C_h'$.

\textit{Case I: }
In this case, $\mathcal{L'}$ must have been considered by the planner before $\mathcal{L}$ as the task planner returns the task assignment with the minimum possible heuristic cost. 

\textit{Case II: }
As $C_h' \leq C'$ and $C'< C$, therefore $C_h' < C$.
In this case, the planner must have considered $\mathcal{L'}$ after generating collision-free trajectories for $\mathcal{L}$ as $C_h' < C$ and $C_h \leq C_h'$. Our Integrated Planner explores all task assignments with heuristic costs less than $C$.

Thus, in both cases, our assumption that Algorithm~\ref{alg:Algorithm1} did not consider $\mathcal{L}'$ is wrong.
Hence, if the heuristic cost considered in the task planning procedure gives a lower bound on cost and the Path  Planner gives the minimum cost collision-free paths corresponding to the task assignment, then the integrated planner will always generate collision-free trajectories for the robots with optimal cost. 
\end{proof}

\noindent
\textbf{Note:} As the number of task assignments is finite for a well-formed MAPD instance, the optimality of Algorithm~\ref{alg:Algorithm1} establishes its \emph{completeness} as well.

\smallskip
\noindent
\textbf{Example.}  We illustrate the algorithm on the example introduced in Figure~\ref{fig:example}a in Section~\ref{sec-problem} with makespan as optimization criteria. Here, we use A* search algorithm~\cite{HartNR68} to find a trajectory for a robot between two locations. In the below task assignments, pickup represents move and pickup. Similarly, the drop represents move and drop. Since there is no intermediate location, all pickups are the boxes' initial locations, and drops are their respective drop locations. The minimum makespan returned by the task planner is 18, and the corresponding task assignment is as follows: 

\smallskip
\begin{tabular}{ll}
$r_1$ :& $\mathtt{pickup-1}$, $\mathtt{drop-1}$ \\
$r_2$ :& $\mathtt{pickup-2}$, $\mathtt{pickup-3}$, $\mathtt{drop-3}$, $\mathtt{drop-2}$ \\
$r_3$ :& $\mathtt{pickup-4}$, $\mathtt{drop-4}$\\
\end{tabular}
\smallskip


In the above task assignment, $r_1$ starts from grid location (8, 4), visits the grid location (4, 3) to pick up object-1 and then visits grid location (7, 6) to drop object-1 and then finally return to grid location (8, 4). The distances computed by the A* algorithm for these movements are 5, 6, and 3, respectively. Also, $r_1$ spends two units of time step to pick and drop the object, thus making the total time steps 16. Similarly, the cost for robots $r_2$ and $r_3$ are 18 and 12, respectively. Therefore, the effective makespan of the plan is 18. 
This heuristic cost is generated by calculating the costs individually without considering the robot-robot collisions. Using the task assignment, we compute collision-free trajectory using the path planner. The cost of collision-free trajectories the path planner returns is 19, 18, and 17, respectively. So, the overall makespan becomes 19. 
Since the estimated task assignment cost is 18 and the collision-free cost is 19, there may be some plans with a cost of 18, resulting in a makespan less than 19. So, we continue to find more plans and obtain the next task assignment as follows:

\smallskip
\begin{tabular}{ll}
$r_1$ :& $\mathtt{pickup-1}$, $\mathtt{drop-1}$ \\
$r_2$ :& $\mathtt{pickup-2}$, $\mathtt{drop-2}$ \\
$r_3$ :& $\mathtt{pickup-4}$, $\mathtt{pickup-3}$ $\mathtt{drop-3}$ $\mathtt{drop-4}$\\
\end{tabular}
\smallskip

The makespan of the above task assignment is 18.
The path planner returns a plan with a makespan of 19, the same as the previously found plan's makespan.
We continue searching for task assignments. The third assignment that we obtain also has a makespan of 18. It is as follows: 

\smallskip
\begin{tabular}{ll}
$r_1$ :& $\mathtt{pickup-1}$, $\mathtt{drop-1}$ \\
$r_2$ :& $\mathtt{pickup-2}$, $\mathtt{pickup-3}$, $\mathtt{drop-2}$, $\mathtt{drop-3}$ \\
$r_3$ :& $\mathtt{pickup-4}$, $\mathtt{drop-4}$\\
\end{tabular}

\smallskip
The above task assignment differs slightly from the first assignment, in which $r_2$ drops object-2 before dropping object-3. The estimated cost returned by the task planner for $r_1$, $r_2$, and $r_3$ is 16, 18, and 12, respectively. Executing the path planner with this task assignment returns a collision-free trajectory with costs of 18, 18, and 12, respectively, thus making the makespan 18. So, this collision-free trajectory becomes the minimum collision-free trajectory, and the minimum cost is updated to 18. As the collision-free cost is not greater than the estimated cost, we terminate the algorithm.
\section{Applications to Warehouse Management}
\label{sec-taskplanning}

In this section, we illustrate our planning mechanism for the object pick-and-drop application in a warehouse scenario, as shown in Figure~\ref{fig:example}. As the tasks are pick-and-drop, $L_i$ for each task $t_i$ contains two entries: $L_i(0)$ denotes the pickup location and $L_i(1)$ represents the drop location.


\subsection{Task Planning Algorithm}
\label{sec-taskconstraints}

\begin{algorithm}[t]
\caption{Task Planner}
\label{alg:Algorithm2}
\begin{algorithmic}[1]
\Procedure {task\_planner }{$\mathcal{W}$, $\mathcal{R}$, $\mathcal{T}$, $\mathcal{A}$, $l\_b$, $u\_b$}
\State $\mathcal{S} \gets \mathtt{generate\_smt\_instance}$ ($\mathcal{W}$, $\mathcal{R}$, $\mathcal{T}$, $\mathcal{A}$)
\If {$\mathcal{S}$.$\mathtt{check} ( )$ $\neq$ SAT}
        \State $\mathtt{return}  \ \emptyset$
\EndIf
\While{$(l\_b \leq u\_b)$}
    \State $\mathcal{S'} \gets  \mathcal{S}$
    \State $mid \gets (l\_b+u\_b)/2$ 
    \State $\mathcal{S} \gets \mathcal{S} \wedge (cost \ge l\_b) $
    \State $\mathcal{S} \gets \mathcal{S} \wedge (cost \le mid)$
    \If {$\mathcal{S}.\mathtt{check}()$ = SAT}
        \State $u\_b \gets \mathcal{S}.\mathtt{get}(cost) - 1$
    \Else
        \State $l\_b \gets mid + 1$
    \EndIf
    \State $\mathcal{S} \gets  \mathcal{S'}$
\EndWhile
\State $\mathcal{L} \gets \mathcal{S}.\mathtt{get} (task\_assignment)$
\State $cost \gets \mathcal{S}.\mathtt{get}  (cost)$
\State $\mathtt{return} \ \langle \mathcal{L}, cost\rangle$
\EndProcedure
\end{algorithmic}
\end{algorithm}

The overall SMT-based task planning algorithm is shown in Algorithm~\ref{alg:Algorithm2}. The $\mathtt{generate\_smt\_instance}$ function generates the SMT constraints for the task planner. We use the notion of \emph{action-step} in our SMT formulation. In each action step, all the robots can perform an action related to movement, pickup, or drop. In our constraints, we keep track of the time taken for each action step for each robot. There is no constraint on how long these actions can take here; we do not generate the final paths but rather just the task assignment. The time required for an action that requires a movement from location $\bf{x}$ to location $\bf{x'}$ is captured by $\mathtt{dist}(\bf{x},\bf{x'})$, as we assume a movement from one grid cell to another takes one unit of time. 
We compute $\mathtt{dist}(\bf{x},\bf{x'})$ using the $\textsf{A*}$ search algorithm~\cite{HartNR68}, which is guaranteed to be an under-approximation of the distance between $\bf{x}$ and $\bf{x'}$ while computing the collision-free trajectories for the robots. 
For a task assignment problem, the number of action steps is denoted by $Z$, which is the same for all the robots. 

\shortversion{
We now describe the constraints to capture the pick-and-drop problem. We define $LOC$ as a set of all the task's pickup and drop locations. Thus, $LOC = \bigcup\limits_{t_m\in\mathcal{T}} \{L_m(0), L_m(1)\}$.
We define the following decision variables:
(a) \mbox{$pos_{i,j} \in \{LOC \cup s_i\}$}, denoting the location of robot $r_i$ after the $j^{th}$ action-step,
(b) $pos\_time_{i,j}$, denoting the time step at which robot $r_i$ is at $pos_{i,j}$ location in the $j^{th}$ action-step,
(c) $action_{i,j}$ denoting the task's object $t_i$ on which robot $r_i$ will perform an action in the $j^{th}$ action-step (it is $-1$ if no action is performed by the robot on any task),
(d) $loc_{i,j}$, denoting the location of task $t_i$ in the $j^{th}$ action-step, which can be either $L_i(0)$, $L_i(1)$ or an intermediate location (it is $-1$ in case the task object is being carried by some robot),
(e) $loc\_time_{i,j}$, denoting the time at which task $t_i$ will be available at $loc_{i,j}$ at the $j^{th}$ action-step (it is $-1$ if the task object is in transition), and
(f) $being\_carried_{i,j}$, denoting the identifier of the robot carrying the object of task $t_i$ in the $j^{th}$ action-step (it is $-1$ when not carried by any robot).

The following constraints capture the initial state $\textsc{Z = 0}$ of the system.
\begin{subequations}
\begin{align}
    \forall r_i\in\mathcal{R}, & \ pos_{i,0} = {s}_i \wedge \ pos\_time_{i,0} = 0 \ \wedge \ action_{i,0} = -1  \nonumber\\
    \forall t_i\in\mathcal{T}, & \ loc_{i,0} = L_i(0) \ \wedge \ being\_carried_{i,0} = -1 \nonumber
\end{align}
\end{subequations}
We formulate constraints for the following actions listed below. At each action step $j$, the robot $r_i$ can perform one of the following actions:
(a) $stay(r_i,j)$: The robot $r_i$ can stay at the same location,
(b) $return(r_i, j)$: The robot $r_i$ can return to its initial location.
(c) $pick(r_i,t_m,j)$: The robot $r_i$ can move and pickup task $t_m$'s object from its initial position,
(d) $drop (r_i,t_m,j)$: The robot $r_i$ can move and drop task $t_m$'s object to its destination,
(e) $pick\_intermediate(r_i,t_m,i_n,j)$: The robot $r_i$ can move and pickup task $t_m$'s object from the intermediate location $i_n$ immediately. This action is performed when the task $t_m$'s object already exists at $i_n$,
(f) $wait\_intermediate(r_i,t_m,i_n,j)$: This action is the same as the previous action, but it is performed when robot $r_i$ will pickup task $t_m$'s object after waiting for the  object to be available at $i_n$, 
(g) $drop\_intermediate(r_i,t_m,i_n,j)$: The robot $r_i$ can move and drop task $t_m$'s object to the intermediate location $i_n$.

The final set of constraints is obtained as a conjunction of the mentioned constraints as described below: 
\begin{subequations}
\begin{align}
    \bigwedge_{r_i \in \mathcal{R}} &\bigwedge_{j=1}^{Z}\ \bigg( stay(r_i,j)  \ \vee  
    \bigvee_{k \in\{{s_i}\}\cup LOC} \Big((pos_{i,j-1} = k) \ \wedge \nonumber \\
    &\ \ \ \big( return(r_i, j) \ \vee \nonumber \\
    &\ \ \ \bigvee_{t_m \in \mathcal{T}} \big(pick(r_i,t_m,j) \ \vee drop (r_i,t_m,j) \ \vee  \nonumber \\
    &\ \ \  \bigvee_{i_n\in\mathcal{I}}(pick\_intermediate(r_i,t_m,i_n,j) \ \vee \nonumber \\ 
    &\ \ \ \ \ \ \ \ \ \   wait\_intermediate(r_i,t_m,i_n,j) \ \vee \nonumber \\
    &\ \ \ \ \ \ \ \ \ \ drop\_intermediate(r_i,t_m,i_n,j)\ )\  \nonumber \big)\Big)\bigg)  
\label{eq:combinedint_main}
\end{align}
\end{subequations}
The equation represents all possible actions the robot $r_i$ can take at $j^{th}$ action step. 
The complete SMT instance also consists of other constraints, such as the exclusion of already found task assignments $\mathcal{A}$ and operational constraints like weights and deadlines.  
The complete details of the SMT Formulation with each of the actions and the constraints are presented in  Section~\ref{sec-taskconstraints} in~\cite{optITMPjournal}.

Note that, the non-movement action like $pick$, $drop$, $pick\_intermediate$ have an implicit move action along with them as $move\_pick$, $move\_drop$, $move\_pick\_intermediate$. The value of $Z$ controls the number of actions one robot must perform. Each task plan requires a mandatory action step for $return\_to\_base$ action. For every task, a robot requires at least two action steps for $pick$ and $drop$. For example, if two robots have to perform four tasks, with $Z$ = 5, each robot can be assigned two tasks. One possible task plan for each of the robots with five action steps can be $move\_pick$, $move\_drop$, $move\_pick$, $move\_drop$, and $return\_to\_base$. If $Z$ is increased to 7 for the previous problem, then one of the robots may perform $3$ tasks, and the other can perform $1$ task.
}

\longversion{
\subsection{SMT Encodings Of Constraints}
\label{appendix_smt_encodings}
In this section, we describe the constraints in detail to capture two variants of the pick-and-drop problem.

\subsubsection{Completing pick-and-drop tasks}
Here, we present the SMT constraints to capture the basic object pick-and-drop problem as illustrated in Figure~\ref{fig:example}.
We define $LOC$ as a set of all the task's pickup and drop locations. Thus, $LOC = \bigcup\limits_{t_m\in\mathcal{T}} \{L_m(0), L_m(1)\}$.

The following are the variables used to track the state of the system.
\begin{itemize}
    \item $pos_{i,j}$ denotes the location of robot $r_i$ after the $j^{th}$ action-step. This location can be one of the locations from the sets $LOC$ and $s_i$ for all $j\geq1$.
    \item $pos\_time_{i,j}$ denotes the time step at which robot $r_i$ is at location $pos_{i,j}$ in the $j^{th}$ action-step.
    \item $action_{i,j}$ denotes on which task's object $r_i$ will perform action in the $j^{th}$ action-step. The value of the variable can be either $-1$ if no action is performed or the task number.
    \item $loc_{i,j}$ denotes the location of task $t_i$ in the $j^{th}$ action-step. This location can be either $L_i(0)$ or $L_i(1)$, or it can be $-1$ in case the task object is being carried by some robot. 
    \item  $being\_carried_{i,j}$ denotes by which robot the object of task $t_i$ is being carried in the $j^{th}$ action-step. It is either the identifier of the robot if the task is in transition or $-1$ if it is steady.
\end{itemize}

The initial state of the system is captured by the following constraints.
\begin{align}
    \forall r_i\in\mathcal{R}, & \ pos_{i,0} = {s}_i \wedge \ pos\_time_{i,0} = 0 \ \wedge \ action_{i,0} = -1  \nonumber\\
    \forall t_i\in\mathcal{T}, & \ loc_{i,0} = L_i(0) \ \wedge \ being\_carried_{i,0} = -1
\label{eq:initConstraints}
\end{align}
A robot can go to $L_m(0)$ only if it picks up the object of task $t_m$ from there.
If robot $r_i$ wants to pick up an object from one of the pickup locations in action step $j$, then the constraints formulation is as mentioned below.
\begin{subequations}
    \begin{align}
        pick &(r_i,t_m,j) \equiv\nonumber \\
               & loc_{m,j-1} = L_m(0) \\ 
        \wedge \ & pos_{i,j} = L_m(0)\ \wedge being\_carried_{m,j} = i \\ 
        \wedge \ & pos\_time_{i,j} = pos\_time_{i,j-1} + \nonumber \\
            & \qquad \qquad  dist(pos_{i,j-1},L_m(0)) + 1 \\
        \wedge \ & loc_{m,j} = -1\ \wedge action_{i,j} = m 
    \end{align}
\label{eq:pick}
\end{subequations}
Equation~\ref{eq:pick}(a) captures that task $t_m$ is at location $L_m(0)$ in the $j-1$ action-step. Equation~\ref{eq:pick}(b) captures that robot $r_i$ is at location $L_m(0)$ in action-step $j$ and the object for task $t_m$ is being carried by robot $r_i$ in action-step $j$. Equation~\ref{eq:pick}(c) captures the time taken by robot $r_i$ while moving from its location in the previous action-step $pos_{i,j-1}$ to its location in the current action-step $L_m(0)$ {and one unit of time for picking up the task $t_m$ by $r_i$. Equation~\ref{eq:pick}(d) ensures that $loc_{m,j}$ is set to $-1$ as an object for task $t_m$ is being carried by a robot now and sets $action_{i,j}$ as $m$ to indicate pickup of the object $t_m$ by robot $r_i$ in action-step $j$.

Similarly, a robot can go to one of the drop locations only if it drops an object there. If $r_i$ wants to drop an object to one of the drop locations in action-step $j$, then the constraints formulation is as below.
\begin{subequations}
    \begin{align}
        drop & (r_i,t_m,j) \equiv\nonumber \\
               & being\_carried_{m,j-1} = i \\
        \wedge \ & pos_{i,j} = L_m(1)\ \wedge being\_carried_{m,j} = -1 \\
        \wedge \ & pos\_time_{i,j} =  pos\_time_{i,j-1} + \nonumber\\
        \qquad & \qquad \qquad \mathtt{dist}(pos_{i,j-1},L_m(1)) + 1 \qquad \\
        \wedge \ & loc_{m,j} = L_m(1)\ \wedge action_{i,j} = m  
    \end{align}
\label{eq:drop}
\end{subequations}
Equation~\ref{eq:drop}(a) captures that task $t_m$ must be carried by robot $r_i$ in action-step $j-1$ to be able to drop it in action-step $j$. Equation~\ref{eq:drop}(b) captures that robot $r_i$ is at location $L_m(1)$ in action-step $j$ and changes $being\_carried_{m,j}$ to $-1$ as the object will be dropped. Equation~\ref{eq:drop}(c) captures the time taken by robot $r_i$ while moving from its location in the previous action-step $pos_{i,j-1}$ to its location in the current action-step $L_m(1)$ and one unit of time to drop the task $t_m$ by $r_i$. Equation~\ref{eq:drop}(d) set $loc_{m,j}$ to indicate that the object for task $t_m$ has been dropped at its final location in action-step $j$ and $action_{i,j}$ to $m$ to indicate dropping of the object for task $t_m$ by robot $r_i$ in action-step $j$. 

A robot can also do nothing for one action step, which is captured as follows.
\begin{align}
stay (r_i,j) \equiv \
            &  pos_{i,j} = pos_{i,j-1} \ \wedge \ action_{i,j} = -1 \nonumber \\
    \wedge \  &  pos\_time_{i,j} = pos\_time_{i,j-1}
\label{eq:stay}
\end{align}

A robot can also return to the base station from a drop location if it is no longer required to do more tasks.
\begin{subequations}
    \begin{align}
        return & (r_i, j) \equiv\nonumber \\
        & pos_{i, j} = s_i \ \wedge \ action_{i,j} = -1 \ \wedge \\ 
        & pos\_time_{i,j} =  pos\_time_{i,j-1} + \mathtt{dist}(pos_{i,j-1},s_i) 
    \end{align}
\label{eq:return}
\end{subequations}
Equation~\ref{eq:return}(a) captures that the robot $r_i$ is at base station $s_i$ at action-step j. Equation~\ref{eq:return}(b) captures the time taken by robot $r_i$ while moving from its location in the previous action-step $pos_{i,j-1}$ to its base station in the current action-step.

Combining Equations~\eqref{eq:pick} - \eqref{eq:return}, for each robot $r_i$ for each possible action-step $j$, we get the constraint below:
\begin{align}
    &\bigwedge_{r_i \in \mathcal{R}} \bigwedge_{j=1}^{Z}\ \bigg( stay (r_i,j)  \ \vee  
    \bigvee_{k \in\{{s_i}\}\cup LOC} \Big( (pos_{i,j-1} = k) \ \wedge \nonumber \\
    &   return(r_i, j)   \bigvee_{t_m \in \mathcal{T}}\big( pick (r_i,t_m,j) \ \vee drop (r_i,t_m,j)\ \big)\Big)\bigg) 
\label{eq:combined}
\end{align}

We now add the constraints to enforce that the task objects move only when being carried by one of the robots.
\begin{align}
    \bigwedge_{t_m \in \mathcal{T}} \bigwedge_{j=1}^{Z} & \big(\bigwedge_{r_i\in\mathcal{R}}action_{i,j}\neq m\big) \implies 
    (loc_{m,j} = loc_{m,j-1} \nonumber \\
    &\wedge being\_carried_{m,j} = being\_carried_{m,j-1})
\label{eq:consistency}
\end{align}
Equation~\ref{eq:consistency} ensures that if no robot is performing an action on task $t_m$, then $t_m$'s location and being carried status remain the same. Note that only picking up or dropping is classified as performing an action. A robot carrying a task's object does not mean that he is performing an action on that task.
\begin{align}
\ \  \bigwedge_{t_m \in \mathcal{T} } (loc_{m,Z} = L_m(1))  
\label{eq:endloc}
\end{align}

Equation~\eqref{eq:endloc} ensures that each task object is at its goal location in the last action step.

The final set of constraints is obtained as the conjunction of constraints capturing the initial states and those in Equations \eqref{eq:initConstraints}, \eqref{eq:combined}, \eqref{eq:consistency} and \eqref{eq:endloc}.

\subsubsection{Enabling collaboration}
In this subsection, we present the additional set of constraints that enables collaboration among the robots with the help of intermediate locations, as illustrated in Figure~\ref{fig:example}b.

A robot can visit one of the \emph{intermediate blocks} to either pick up or drop off an object. While picking up from an intermediate block, a validation of the timing consistency between the drop-off and pick-up of an object is required.
We introduce new SMT variables named $loc\_time_{i,j}$ to add this ability.
\begin{itemize}
    \item $loc\_time_{i,j}$ denotes the time step at which task $t_i$ will be available at $loc_{i,j}$ at the $j^{th}$ action-step. It is $-1$ if the task object is in transition.
\end{itemize}

Assume that a robot $r_1$ dropped the object of task $t_l$ at location $i_1$ in action step $j$ with $loc\_time_{l,j} = 20$. Now, suppose another robot $r_2$, which has been idle for all the action steps up to step $j+1$, goes to pick up this object. So, $pos_{2,j+1} = i_1$, but it is possible that $pos\_time_{2,j} + dist(pos_{2,j},i_1) < 20$. So even though $r_2$ will go to pick up the object at a later action step, it will reach the location before the task object is available there. Thus, in our constraints, we need to accommodate this possibility into the computation of $pos\_time$ as the action will be completed only when the pickup is done.

To accommodate the intermediate locations in $\mathcal{I}$  in our constraints we update $LOC$ as follows:
$$LOC = \bigg( \bigcup\limits_{t_m\in\mathcal{T}}\{L_m(0), L_m(1)\} \bigg) \cup \bigg(\bigcup\limits_{i_n\in\mathcal{I}}\{i_n\}\bigg)$$

Constraints formulation for $r_i$ picking up one of the task objects from one of the \emph{intermediate blocks} in action step $j$ is as below in Equation~\eqref{eq:pickint} and~\eqref{eq:waitint}.
\begin{subequations}
    \begin{align}
            pick&\_intermediate(r_i,t_m,i_n,j) \equiv \nonumber \\
           & loc_{m,j-1} = i_n \\ 
            \wedge \ & loc\_time_{m,j-1} \le pos\_time_{i,j-1} + \nonumber \\
            &\qquad \mathtt{dist}(pos_{i,j-1},i_n) + 1 \\
            \wedge \ & pos_{i,j} = i_n\ \wedge being\_carried_{m,j} = i \\  
            \wedge \ & pos\_time_{i,j} = pos\_time_{i,j-1} + \nonumber \\
            &\qquad dist(pos_{i,j-1},i_n) + 1 \\  
            \wedge \ & loc_{m,j} = -1\ \wedge loc\_time_{m,j} = -1 \\ 
            \wedge \ & action_{i,j} = m 
    \end{align}
\label{eq:pickint}
\end{subequations}

Equation~\eqref{eq:pickint} is similar to Equation~\eqref{eq:pick} except the extra constraint in Equation~\ref{eq:pickint}(b), which ensures that the task object is at the location before the robot reaches there to pick it up.
\begin{subequations}
\begin{align}
        wait&\_intermediate(r_i,t_m,i_n,j) \equiv \nonumber \\
        & loc_{m,j-1} = i_n \\ 
        \wedge\ & loc\_time_{m,j-1} > pos\_time_{i,j-1} + \nonumber \\
        &\qquad \mathtt{dist}(pos_{i,j-1},i_n) + 1 \\
        \wedge \ & pos_{i,j} = i_n\ \wedge being\_carried_{m,j} = i \\
        \wedge \ & pos\_time_{i,j} = loc\_time_{m,j-1} + 2 \\ 
        \wedge \ & loc_{m,j} = -1\ \wedge loc\_time_{m,j} = -1 \\ 
        \wedge \ & action_{i,j} = m 
\end{align}  
\label{eq:waitint}
\end{subequations}

Equation~\eqref{eq:waitint} is similar to  Equation~\eqref{eq:pick} except the changes in Equation~\ref{eq:waitint}(b) and Equation~\ref{eq:waitint}(d). Equation~\ref{eq:waitint}(b) ensures that this is the case where the robot has reached the location before the task object. Equation~\ref{eq:waitint}(d) sets the $pos\_time_{i,j}$ to the time at which the task object can be picked up by the robot. After a robot drops the task at $loc\_time_{m,j-1}$ time, any other robot will take at least 1 unit of time to reach that location and 1 more unit to pick up the task from the intermediate location.

Constraints formulation for $r_i$ dropping one of the task objects it carries to one of the $intermediate\ blocks$ in action step $j$ is shown below.
\begin{subequations}
\begin{align}
        drop&\_intermediate(r_i,t_m,i_n,j) \equiv \nonumber \\
        & being\_carried_{m,j-1} = i \\
        \wedge \ & pos_{i,j} = n\ \wedge being\_carried_{m,j} = -1 \\ 
        \wedge \ &  pos\_time_{i,j} = pos\_time_{i,j-1} + \nonumber\\
        & \qquad dist(pos_{i,j-1},i_n) + 1 \\ 
        \wedge \ & loc_{m,j} = n\ \wedge action_{i,j} = m \\ 
        \wedge \ & loc\_time_{m,j} = pos\_time_{i,j} 
\end{align}
\label{eq:dropint}
\end{subequations}
Equation~\eqref{eq:dropint} is similar to Equation~\eqref{eq:drop} as dropping at the intermediate location is similar to dropping at the task's goal location.

Moreover, We need to add constraints to update $loc\_time$ in Equation~\eqref{eq:pick},~\eqref{eq:drop} and~\eqref{eq:consistency}.
Finally, we have to change Equation~\eqref{eq:combined} to 
\begin{align}
    \bigwedge_{r_i \in \mathcal{R}} &\bigwedge_{j=1}^{Z}\ \bigg( stay(r_i,j)  \ \vee  
    \bigvee_{k \in\{{s_i}\}\cup LOC} \Big((pos_{i,j-1} = k) \ \wedge \nonumber \\
    &\ \ \ \big( return(r_i, j) \ \vee \nonumber \\
    &\ \ \ \bigvee_{t_m \in \mathcal{T}} \big(pick(r_i,t_m,j) \ \vee drop (r_i,t_m,j) \ \vee  \nonumber \\
    &\ \ \  \bigvee_{i_n\in\mathcal{I}}(pick\_intermediate(r_i,t_m,i_n,j) \ \vee \nonumber \\ 
    &\ \ \ \ \ \ \ \ \ \   wait\_intermediate(r_i,t_m,i_n,j) \ \vee \nonumber \\
    &\ \ \ \ \ \ \ \ \ \ drop\_intermediate(r_i,t_m,i_n,j)\ )\ \big)\Big)\bigg)  
\label{eq:combinedint}
\end{align}
The final set of constraints is obtained as the conjunction of constraints capturing the initial states and those in Equation \eqref{eq:combinedint},~\eqref{eq:consistency} and~\eqref{eq:endloc}.

\subsubsection{Other operational constraints}
In our task planning framework, we can easily add other operational constraints. The constraints can be mainly of two types based on their association with time. If the constraint is associated with time, e.g., deadline, we need to handle the constraint in Task Planner as well as Path Planner. However, constraints like capacity are not related to time and can be handled through Task Planner only. We have added two constraints to demonstrate both types. 
 
\textit{Capacity constraints.}
We can assign specific weights to task objects and specific weight-carrying capacities to robots. This constraint is independent of time, so it needs to be handled in Task Planner only. Let the variable $capacity_{i,j}$ denote the weight carrying capacity of robot $r_i$ in action-step $j$ and $weight_l$ denote a constant weight of object for task $t_l$. Now, we add the following constraint to all the sets of constraints involving a pickup:
\begin{align}
capacity_{i,j-1} & \ge weight_{l} \ \wedge \nonumber \\ 
& capacity_{i,j} = capacity_{i,j-1} - weight_l
\end{align}
This checks for weight satisfiability before assigning a task to the robot and updates the weight-carrying capacity of the robot after picking it up. Similarly, for all the set of constraints involving a drop operation (Equation~\eqref{eq:drop},~\eqref{eq:dropint}), we add:
\begin{align}
capacity_{i,j} = capacity_{i,j-1} + weight_l
\end{align}
This updates the weight-carrying capacity of the robot after dropping.

\textit{Deadline constraints.} We can also add a specific deadline $deadline_m$ to each task $t_m$ by adding the following constraint for each task in Equation~\eqref{eq:endloc}. This constraint is related to time, so it needs to be handled in Task Planner as well as Path Planner.
\begin{align}
loc\_time_{m,Z} \le deadline_m.
\end{align}

\subsubsection{Exclusion}
We provide a way to add an already found task assignment $\mathcal{A}$ as an exclusion to the SMT planner so that the task planner finds the best solution excluding the already found assignments. Let $POS_{i, j}$ be the position of robot $r_i$ and action step $j$ in the existing solution. 
\begin{align}
\label{eq:exclusion}
    \bigvee_{r_i \in \mathcal{R}} \big( \bigvee_{j \in \mathcal{Z}} ( pos_{i, j} \ne POS_{i, j}) \big)  
\end{align}
To add the existing solution as an exclusion, we have added Equation~\ref{eq:exclusion} to the set of constraints in the SMT solver.

\subsubsection{Objective function}

We present the two cost functions related to the total cost and makespan of the trajectories.
\noindent
\begin{enumerate}
    \item \textsf{Total Cost}: Here, we minimize the total work done by all the robots.
    $$\mathtt{minimize} \ (\sum_{r_i\in \mathcal{R}} pos\_time_{i,Z} )$$
    \item \textsf{Makespan}:  Here, we minimize the time required to complete the mission.
    $$ \mathtt{minimize} \ (\max_{r_i\in \mathcal{R}} \ pos\_time_{i,Z} )$$
\end{enumerate}
}
}

The value of $Z$ must be set such that it satisfies the condition $Z \ge 1 + \lceil |\mathcal {T}| / |\mathcal{R}| \rceil *2$ for the problem to be solvable. To search through all possible task assignments ignoring load balancing among robots, $Z \ge 1 + |\mathcal{T}|*2$.

The task planner uses a binary search algorithm to optimize the cost function guided by the SMT constraints. 
Note that modern SMT solvers like Z3~\cite{Z3_Moura} provide a mechanism to solve a minimization problem directly within the solver. However, our experience shows that attempting to solve an optimization problem directly using an SMT solver often fails to succeed within a reasonable time. In contrast, the binary search-based optimization method can successfully produce the result within a bound.

\subsection{Path Planning} 
\label{sec-motionplanning}

For the path planner, we adopt the CBS-PC algorithm~\cite{zhang2022multi} for multi-agent pathfinding for precedence-constrained goal sequences. CBS-PC uses Multi-Label A*~\cite{grenouilleau2019multi} as its low-level planner. Multi-Label A* can find optimal paths for a sequence of goal locations. As we deal with intermediate drops and pickups, the intermediate pickup must be executed after the intermediate drop for the same task. This is taken care of by the precedence constraints presented in the algorithm. 
We also introduce the following enhancements to the basic CBS-PC algorithm: (i) makespan optimization criteria along with the sum of total costs, (ii) inclusion of deadlines support for goals and checkpoints, and (iii) handling empty goals as the task planner may not assign tasks to some robots.


\longversion{\section{Evaluation}
We evaluate our planning methodology on various instances of warehouse pick-and-drop application scenarios.

\subsection{Experimental Setup}

For all our experiments, we use a desktop computer with an i7-4770 processor with a 3.90\,GHz frequency and 12\,GB of memory. We use \textsf{Z3} SMT solver~\cite{Z3_Moura} from Microsoft Research to solve task-planning problems. For MA*-CBS-PC, we adapt the C++ code provided by \cite{zhang2022multi} with appropriate modifications. 
The source code of our implementation is available at \url{https://github.com/iitkcpslab/Opt-ITPP}.

\begin{figure}[t]
\begin{flushleft}
{
    \label{fig:predef_map}
    \includegraphics[height=4cm, width=4cm]{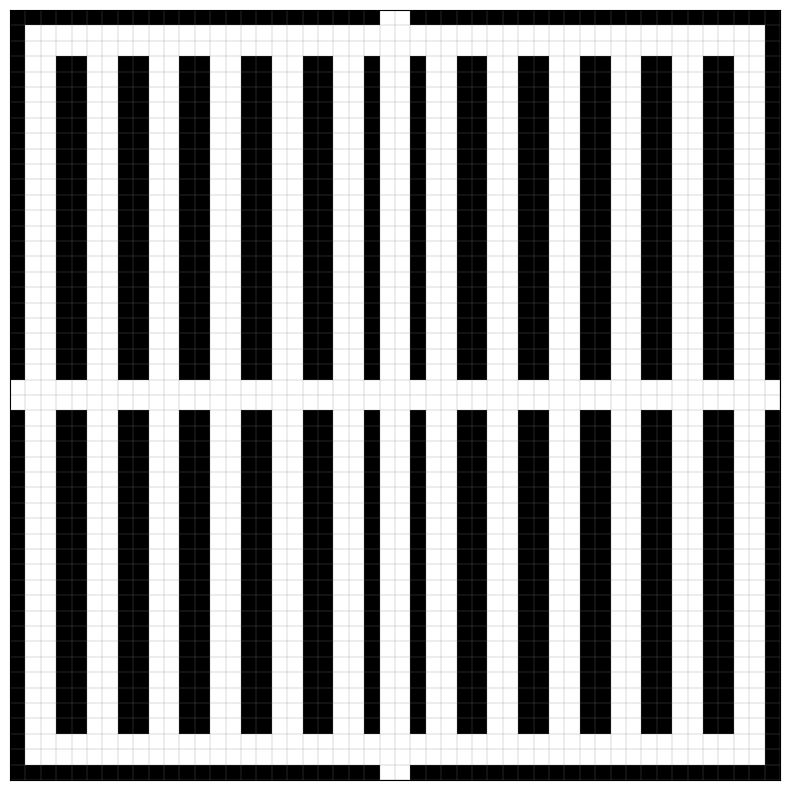}
}
{
    \includegraphics[height=4cm, width=4cm]{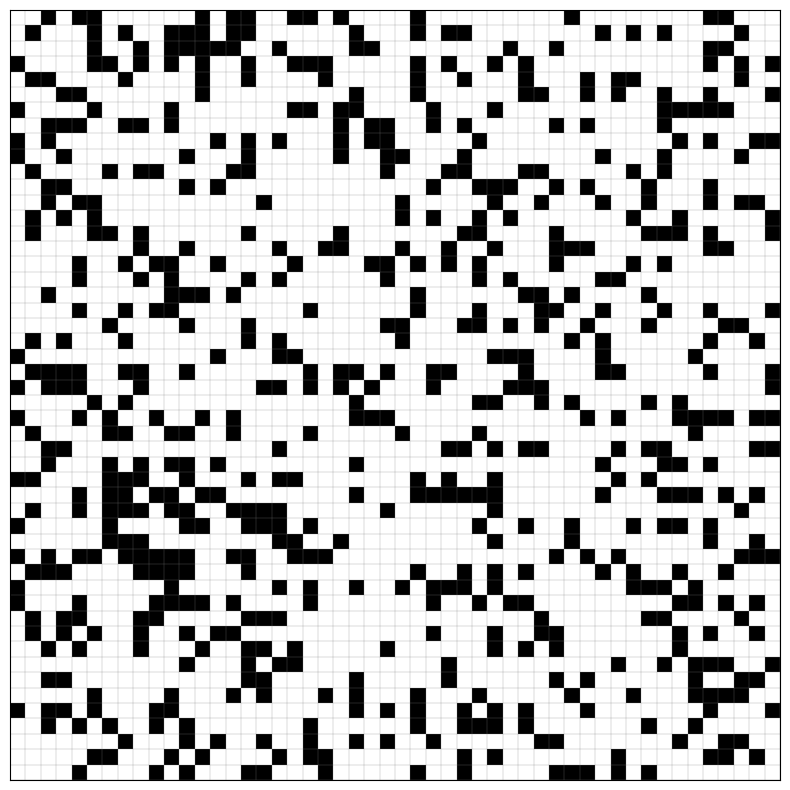}
    \label{fig:random_map}
}
\caption{Predefined (left) and Randomly generated (right) 50x50 map}
\label{fig:extra_maps}
\end{flushleft}
\end{figure} 

For any data point, we take the average of the results for multiple generated scenarios where the initial location of the robots and the task locations are generated randomly. For each experiment, we have used 20 different examples using predefined as well as randomly generated maps as shown in Figure~\ref{fig:extra_maps}. The first one resembles a warehouse, and the second is one for which the obstacles are generated randomly.

In our experiments, we consider two planners: one optimizes the makespan (opt\_makespan), and the other optimizes the total cost (opt\_cost). 
In all the tests, we have set the timeout as $3600\si{\second}$. In the plots, for all the cases where the planner fails to solve the problem in $3600\si{\second}$, we take its computation time as $3600\si{\second}$ and the metric value as the average of the values for the instances the planner can solve successfully. 

\subsection{Evaluation of Task Planner}

\label{appendix_tp_eval}
In this section, we evaluate our SMT-based task planner. To evaluate the task planner, we use a typical warehouse-like workspace and randomly generated location pairs. We evaluate our Task Planner extensively for various settings by varying the number of robots and tasks, map size, and the number of actions (Z).  We use the average of the metrics obtained by executing the planner on ten problem instances each. 

\begin{figure}[t]
\begin{flushleft}
{
    \label{fig:tp_varyrt_om_ct}
    \includegraphics[height=3.1cm, width=4.1cm]{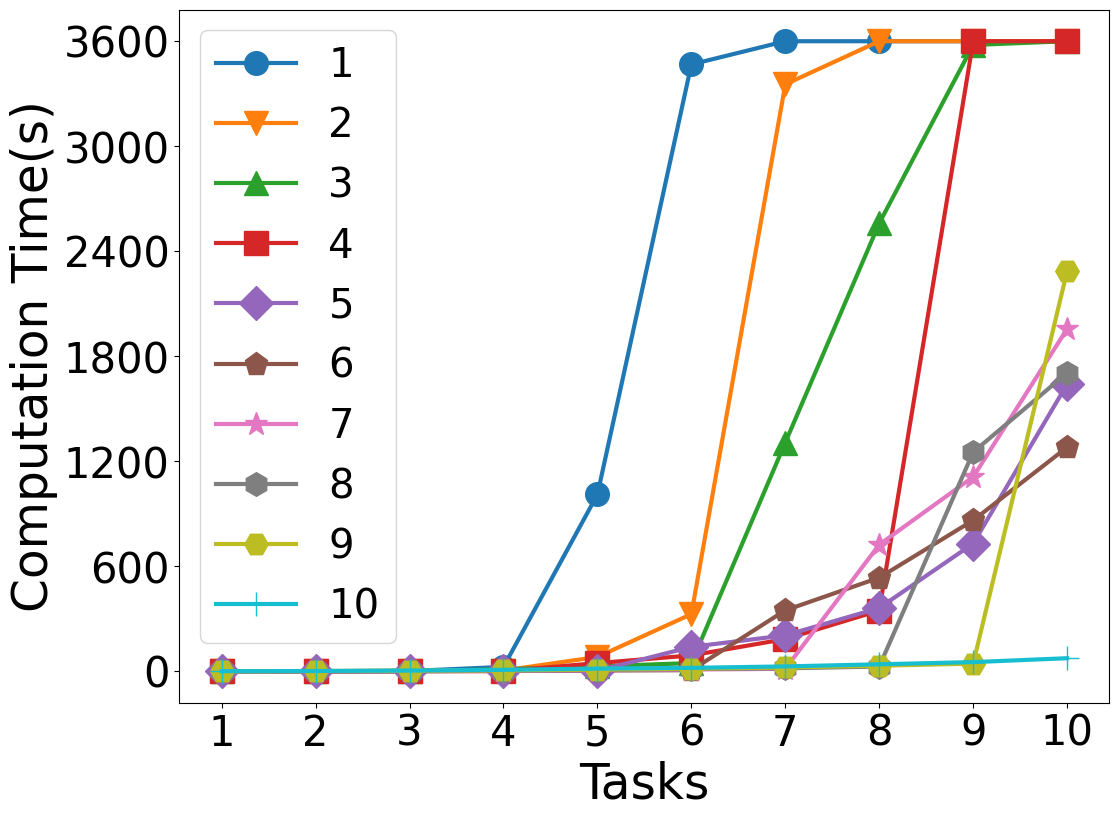}
}
{
    \label{fig:tp_varyrt_om_ms}
    \includegraphics[height=3.1cm, width=4.1cm]{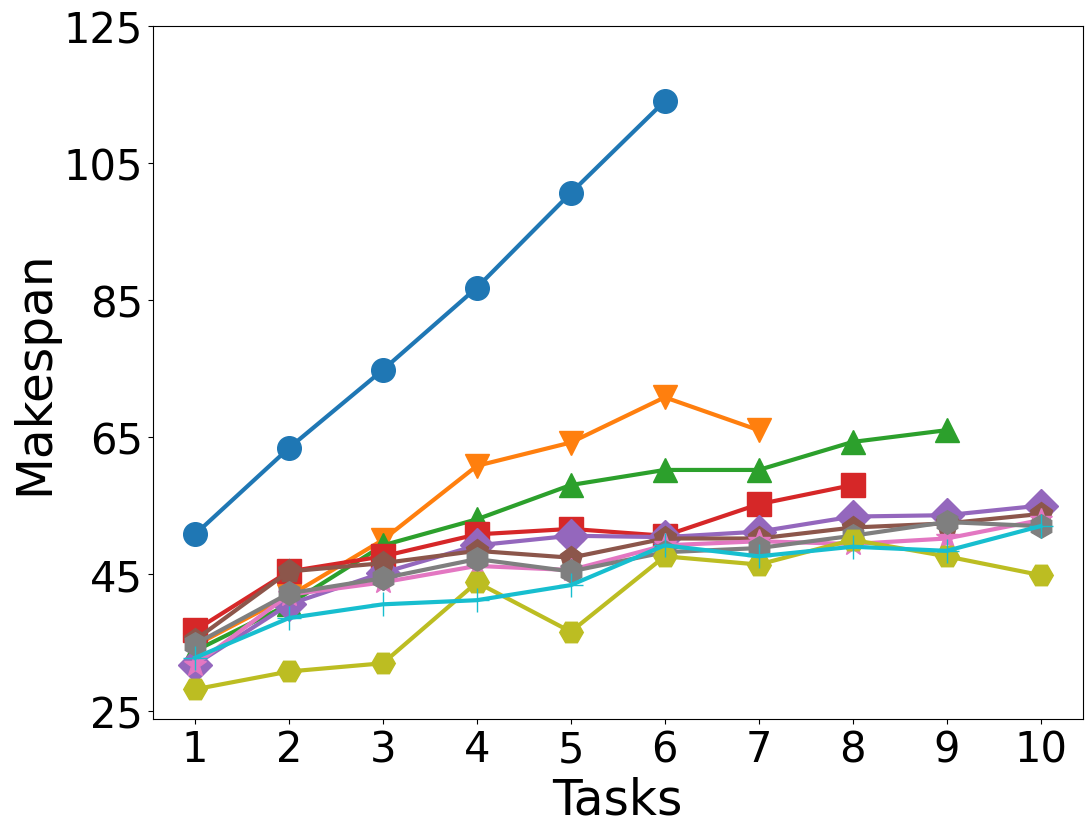}
} 
\caption{Task Planner : The effect of increasing the number of robots (shown in legends) and the number of tasks for task planner with makespan optimization criteria on a) Computation Time (left) and b) Makespan (right)}
\label{fig:tp_varyrt_om}
\end{flushleft}
\end{figure} 


\begin{figure}[t]
\begin{flushleft}
{
    \label{fig:tp_varyrt_oc_ct}
    \includegraphics[height=3.1cm, width=4.1cm]{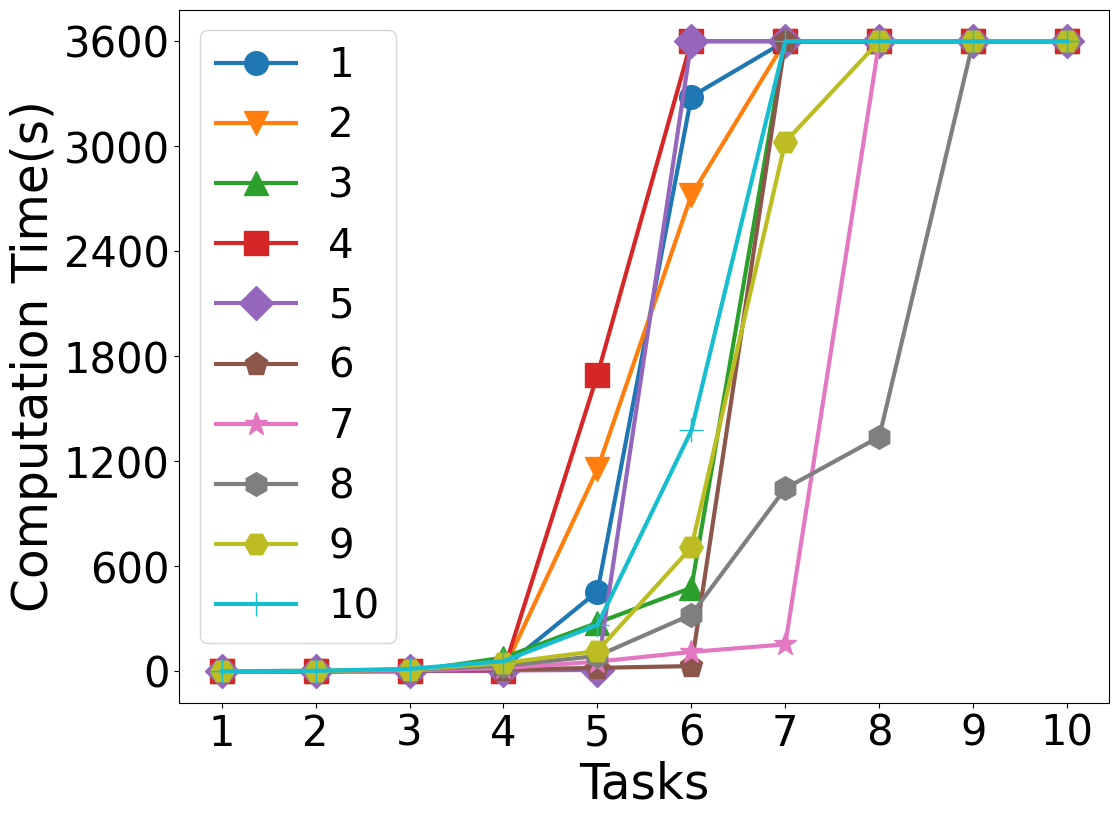}
}
{
    \label{fig:tp_varyrt_oc_tc}
    \includegraphics[height=3.1cm, width=4.1cm]{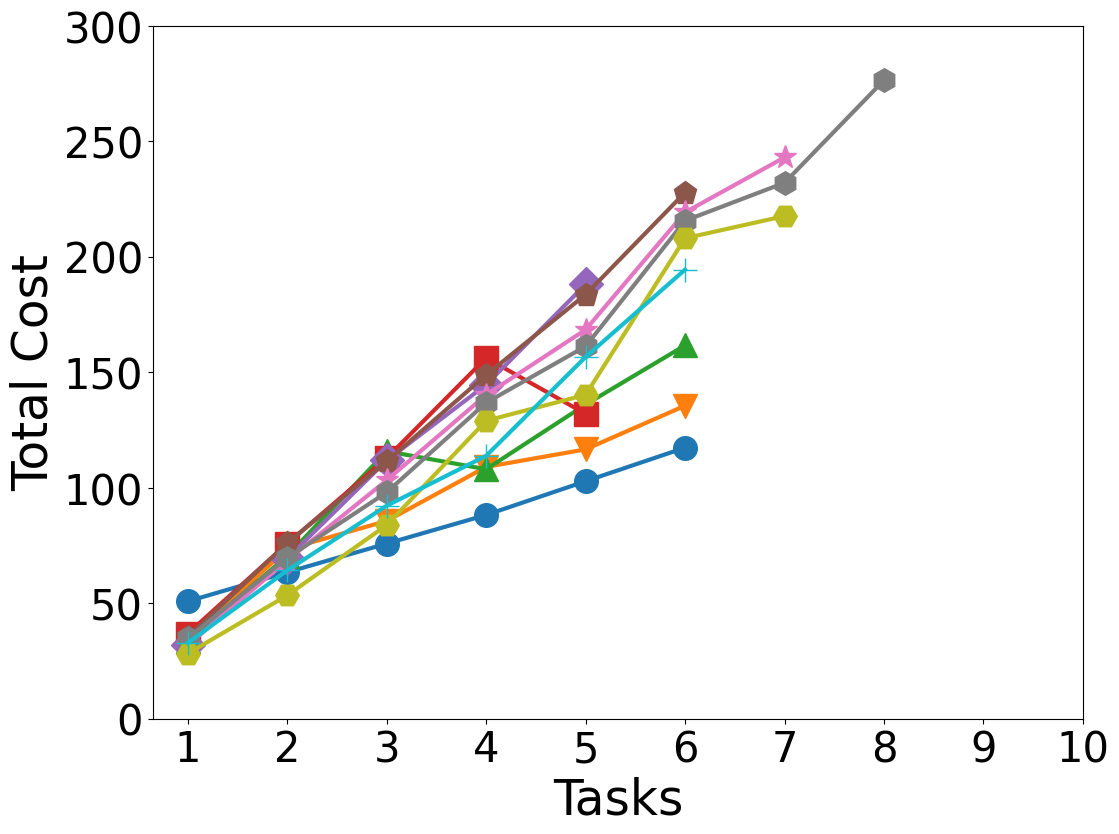}
} 
\caption{Task Planner : The effect of increasing the number of robots (legends) and the number of tasks for task planner with total cost optimization criteria  on a) Computation Time (left) and b) Total Cost (right)}
\label{fig:tp_varyrt_oc}
\end{flushleft}
\end{figure}

\begin{figure}[t]
\begin{flushleft}
{
    \label{fig:tp_varyw_ct}
    \includegraphics[height=3.1cm, width=4.1cm]{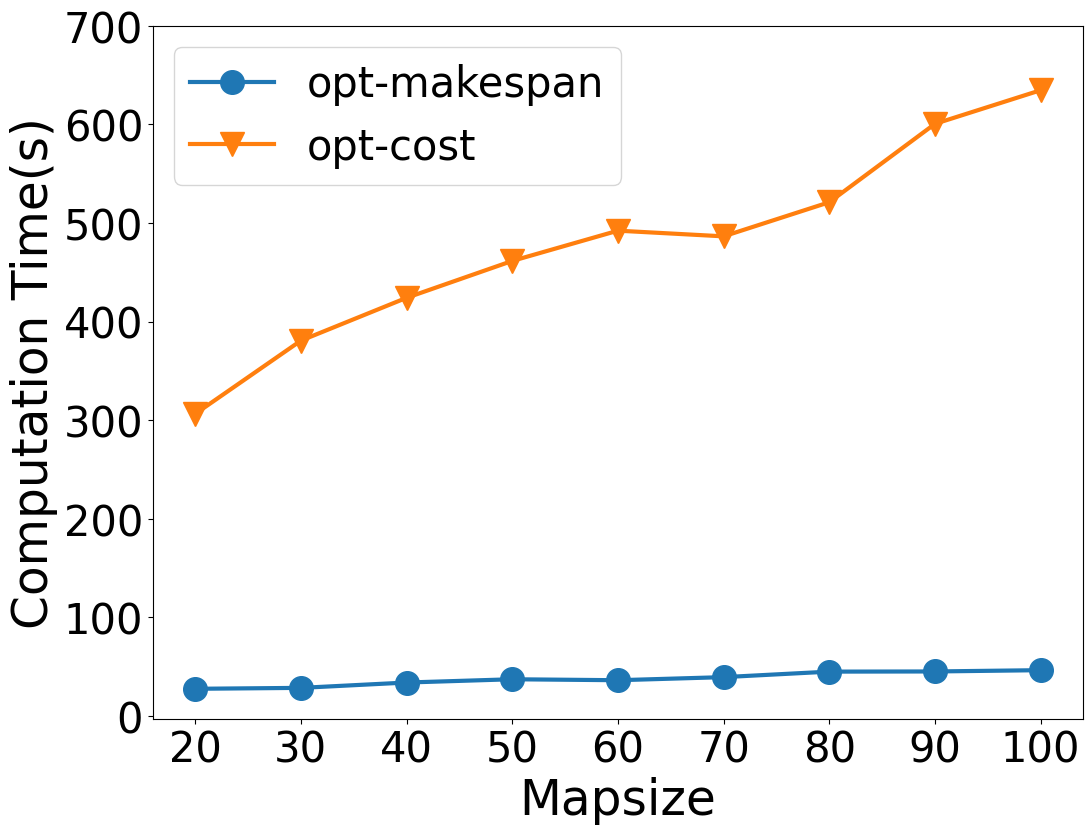}
}
{
    \label{fig:tp_varyw_ms}
    \includegraphics[height=3.1cm, width=4.1cm]{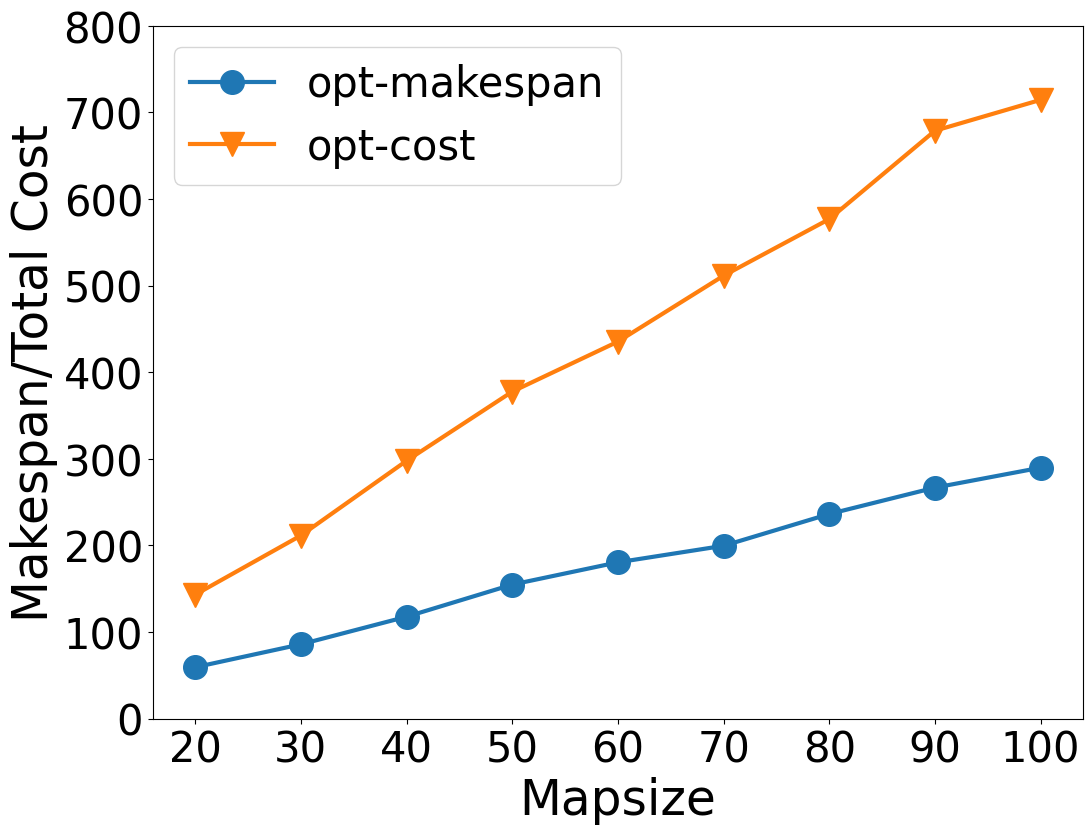}
} 
\caption{Task Planner : The effect of changing workspace size for various optimization modes (shown in legends)  on a) Computation Time (left) and b) Makespan/Total Cost (right)}
\label{fig:tp_varyw}
\end{flushleft}
\end{figure}

\begin{figure}[t]
\begin{flushleft}
{
    \label{fig:tp_varyz_ct}
    \includegraphics[height=3.1cm, width=4.1cm]{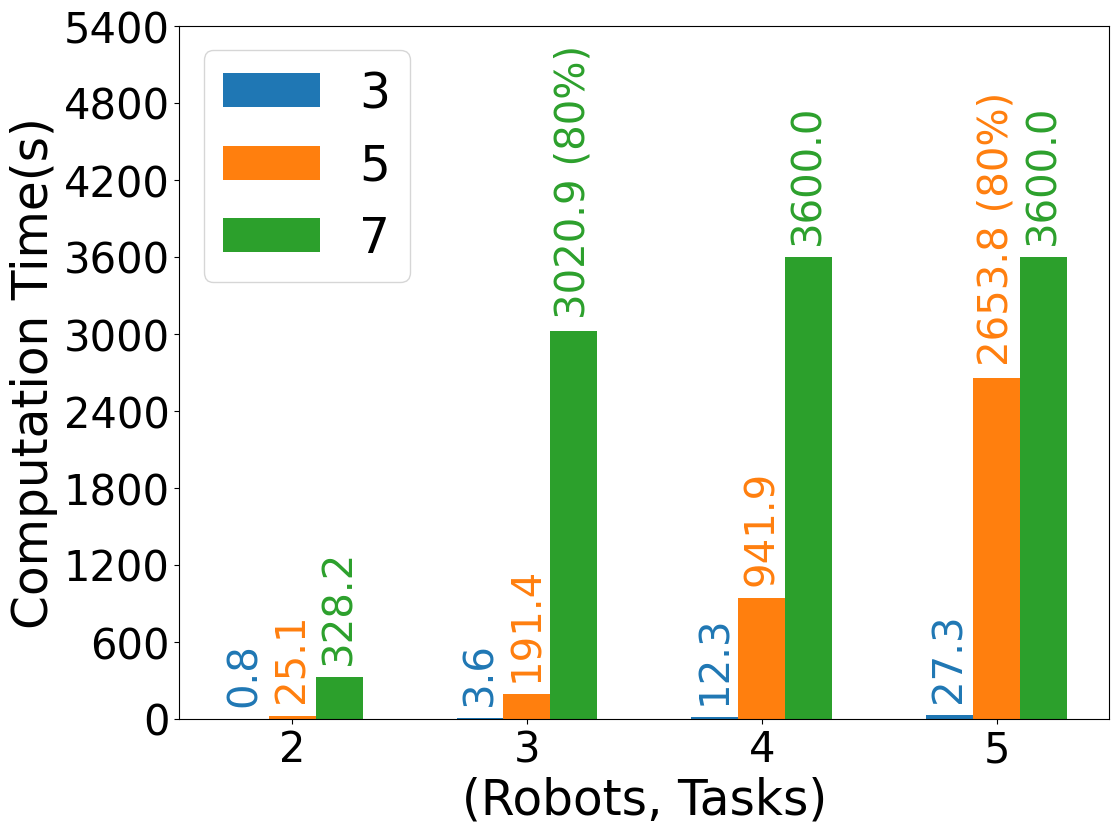}
}
{
    \label{fig:tp_varyz_ms}
    \includegraphics[height=3.1cm, width=4.1cm]{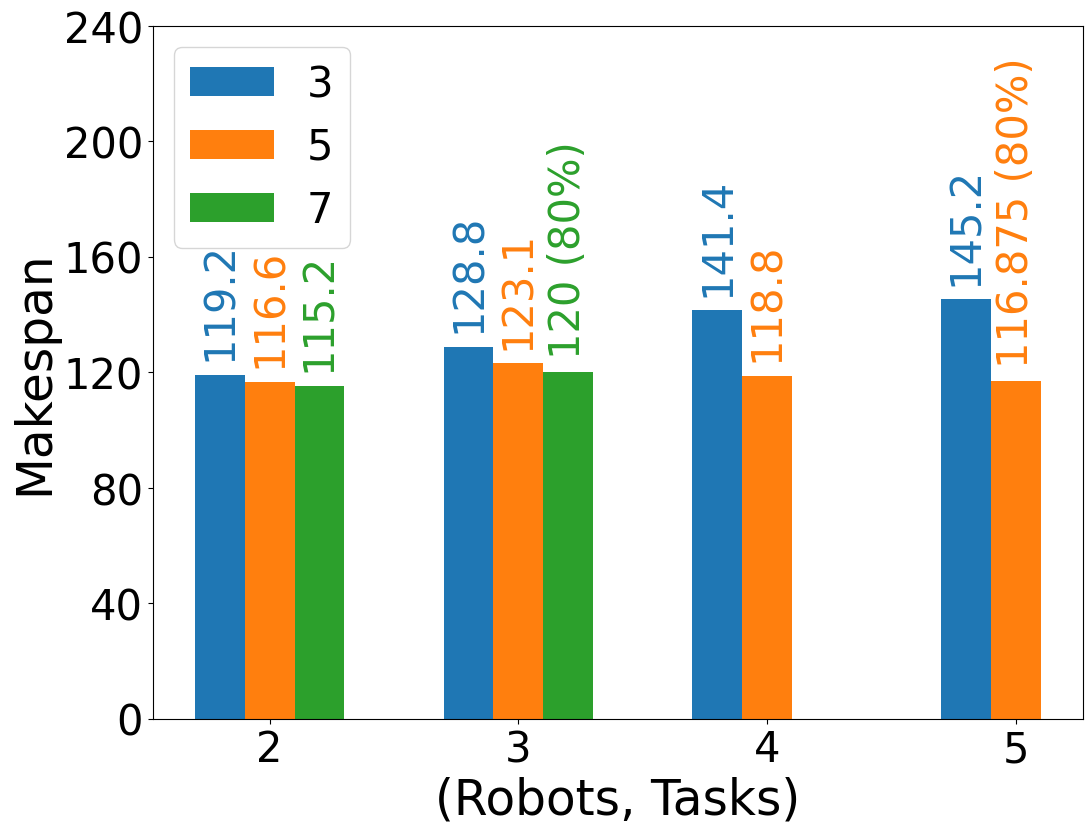}
} 
\caption{Task Planner : The effect of increasing $Z$ (shown in legends) on a) Computation Time (left) and b) Makespan (right)}
\label{fig:tp_varyz}
\end{flushleft}
\end{figure} 

\subsubsection{Task Planning without collaboration} 
We evaluated our task planner for varying number of robots and tasks with our optimization criteria. We employ a $20 \times 20$ workspace for these evaluations with the minimum satisfiable $Z$ for each number of robots and the tasks pair. Figure~\ref{fig:tp_varyrt_om}a shows how the computation time varies with the increase in the number of robots and the number of tasks for makespan optimization criteria.  The plot shows that the computation time is very low when the number of tasks is less than or equal to the number of robots. For each robot, we observe an increase in computation time for an increase in tasks. But, for each increase in the number of action steps denoted by $Z$, we observe a substantial increase in computation time. This increase in $Z$ reflects a corresponding rise in the tasks assigned per robot, approximated as the rounded value of the number of tasks divided by the number of robots.

Figure~\ref{fig:tp_varyrt_oc}a is a similar plot for total cost optimization criteria. From the plot, we can observe that except for a single robot, the computation time for optimizing total cost is higher than optimizing makespan.  The difference keeps increasing with higher robot and task counts. Also, the planner cannot hand more than 8 tasks with any number of robots for total cost optimization. The results indicate that our task planner with total cost optimization is not as scalable as optimizing makespan for varying numbers of robots and tasks.

Figure~\ref{fig:tp_varyrt_om}b shows the change in makespan, and Figure~\ref{fig:tp_varyrt_oc}b shows the change in total cost with the increase in the number of robots and tasks. The makespan improves with the increased number of robots as the tasks get distributed between more robots. Generally, the total costs for individual robots should increase linearly with increased number of tasks. The makespan may remain the same for the same $Z$ for individual robots and increase when $Z$ increases. However, we observe many fluctuations and anomalies in our plots as all the problem instances are randomly generated.

Figure~\ref{fig:tp_varyw}a shows the changes in computation time by varying workspace sizes for $3$ robots and $5$ tasks, with both the optimization criteria. From the plot, we can see that the computation time increases slightly with an increase in workspace size for optimizing makespan. For optimizing total cost, there is no monotonous increase in computation time with varying sizes. Generally, the computation time is expected to increase as the range of values to search in the binary search increases with an increase in workspace size. 
Another important observation is that the task planner takes significantly more time to optimize the total cost than the makespan. 

Figure~\ref{fig:tp_varyw}b shows the change in makespan and total cost with the change in workspace size. The plot contains the makespan metric for opt-makespan mode and the total cost metric for opt-cost mode. As expected, both metrics are increasing linearly with an increase in the workspace size.

\subsubsection{Task Planning with collaboration}

 Here, we have experimented on a $50 \times 50$ workspace with multiple values of $Z$, a minimum satisfiable $Z$ ($Z_{min}$) to show no collaboration, and $Z_{min}+2$ and $Z_{min} + 4$ to show collaboration. Each increase of 2 in $Z$ allows each robot to perform two extra actions. So, it can perform additional $drop\_at\_intermediate$ and $pickup\_from\_intermediate$. We have executed our task planner for $n$ robots $n$ tasks for various $Zs$. Figure~\ref{fig:tp_varyz}a and  Figure~\ref{fig:tp_varyz}b show how computation time and makespan vary with different values of $Z$, respectively. With the increase in Z, we see the computation time increase tremendously, but with higher Z, we get plans with a better makespan. For $Z=5$, we hit timeout for some instances with five robots and five tasks. For $Z=7$, we hit timeout for some instances with three robots with three tasks and timeout for all instances with four robots with four tasks and above.

\subsection{Evaluation of Integrated Task and Path Planner}
In this section, we evaluate our integrated task and path planners by comparing it  with a state-of-the-art classical planner ENHSP-20~\cite{scala2020subgoaling}. Since our planner deals with numeric values for capacities and deadlines, we required a classical planner supporting numeric values and providing optimal solutions. 
We explored the possibility of modeling our problem as a constrained TSP problem and utilizing the meta-heuristic algorithm LKH3~\cite{helsgaun2017extension} to get a near-optimal solution. 
However, we did not find any extension of LKH3 that can deal with all the constraints we consider in our problem.
On the other hand, it was quite straightforward to model our exact problem in SMT as well as in ENHSP-20.

In our result plots, in all the instances where time is $3600\si{\second}$, the planner experiences a timeout. We include success percentages as annotations wherever the planner could not solve all the problems. In the plots, for all the cases where the planner faces a timeout,
we take its computation time as $3600\si{\second}$ and the metric value as the average of the values for the instances the planner can solve successfully. 
\begin{figure}[t]
\begin{flushleft}
{
    \includegraphics[height=3.15cm, width=4.1cm]{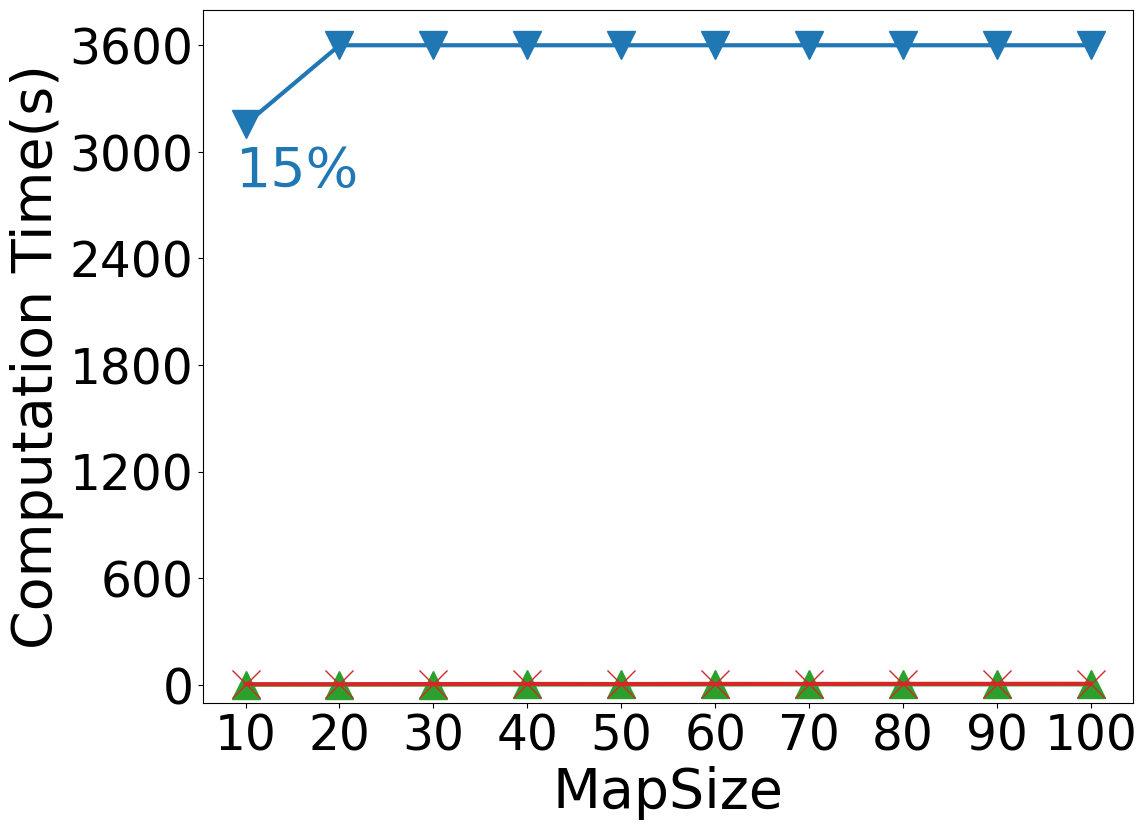}
}
{
    \includegraphics[height=3.15cm, width=4.1cm]{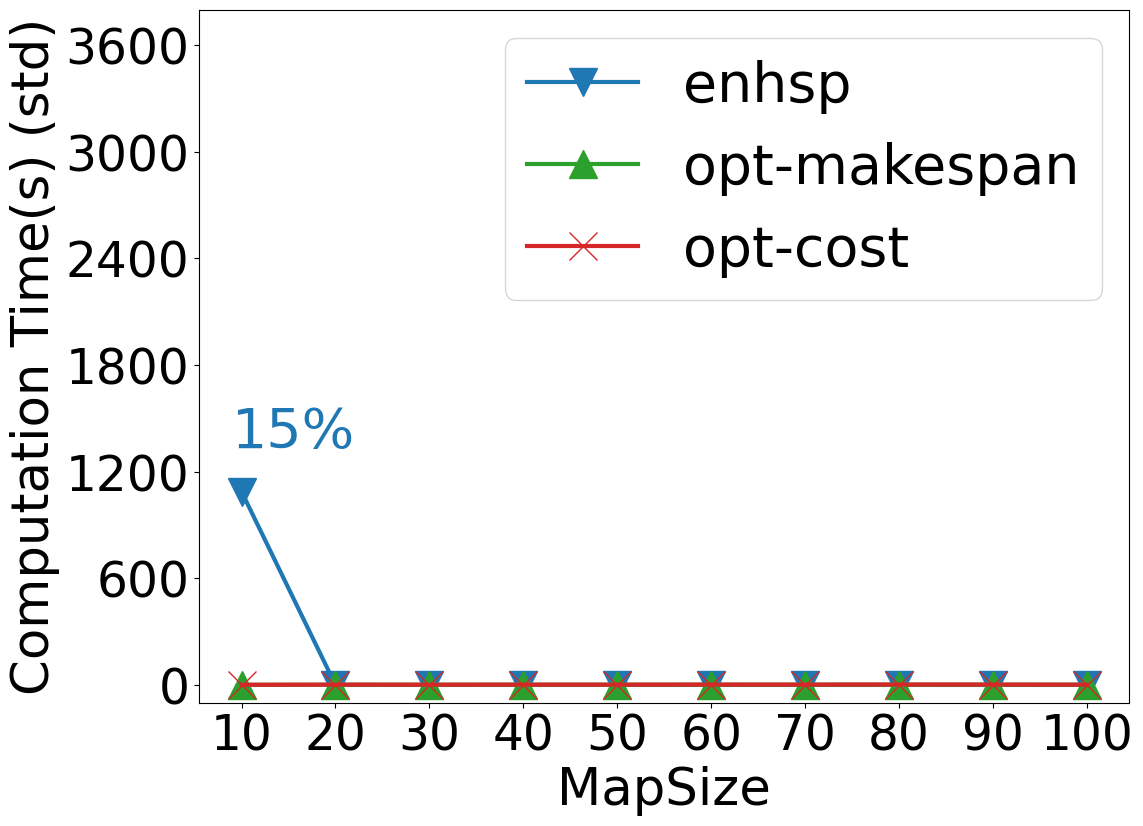}
}
\caption{Comparison of various planners (shown in legends) for varying workspace size on Computation Time. Mean (left) and Standard deviation (right).}
\label{fig:itmp_varyw_ct} 
\end{flushleft}
\end{figure}

\begin{figure}[t]
\begin{flushleft}
{
    \includegraphics[height=3.15cm, width=4.1cm]{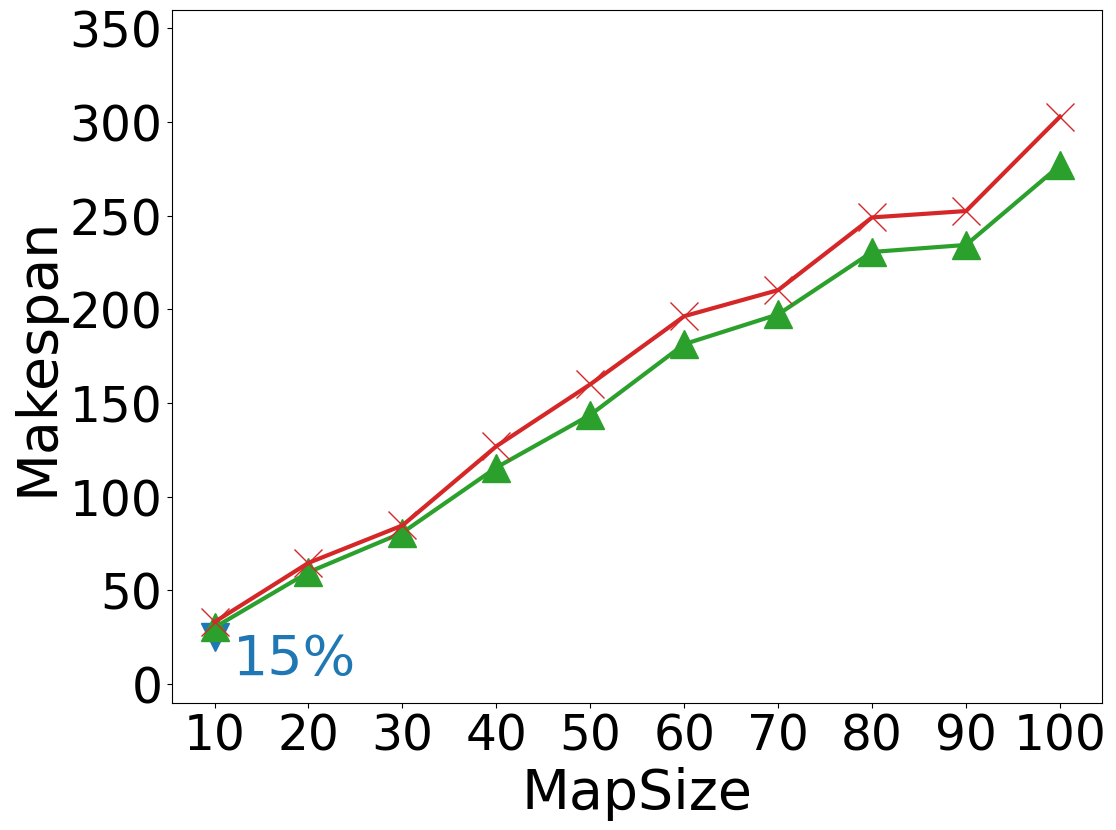}
}
{
    \includegraphics[height=3.15cm, width=4.1cm]{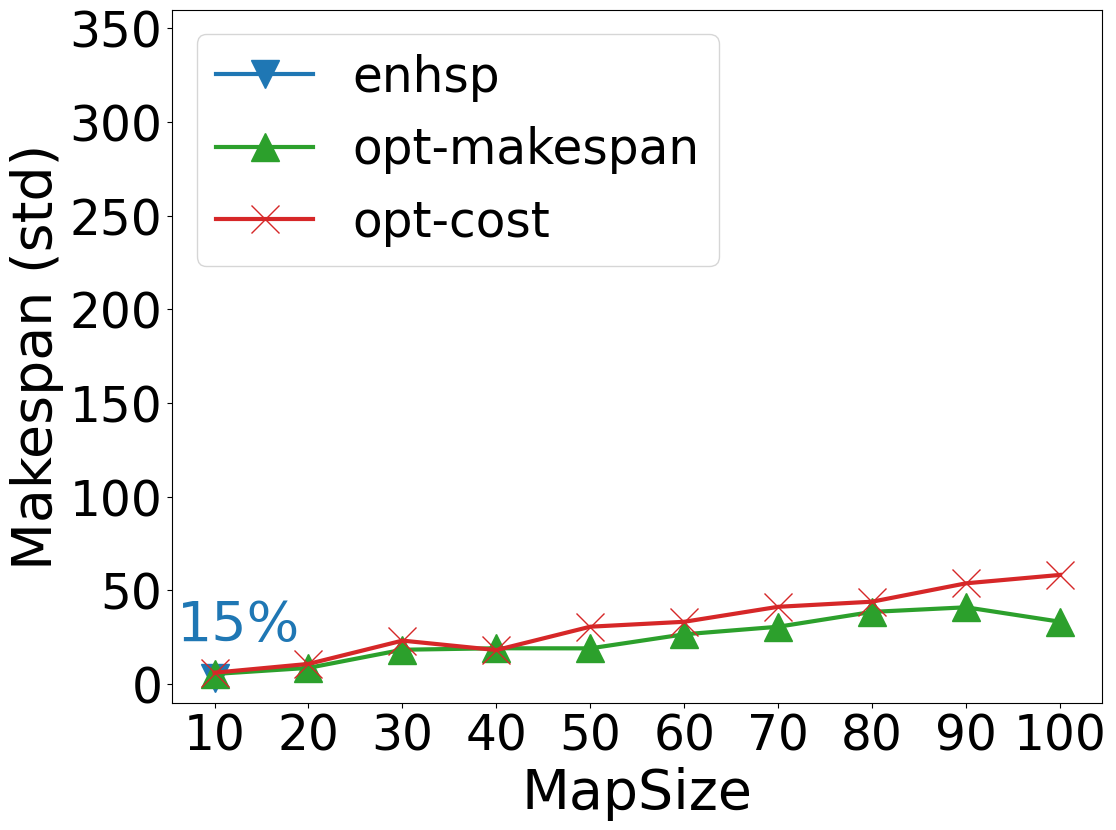}
}
\caption{Comparison of various planners (shown in legends) for varying workspace size on Makespan. Mean (left) and Standard deviation (right).}
\label{fig:itmp_varyw_ms} 
\end{flushleft}
\end{figure}

\begin{figure}[t]
\begin{flushleft}
{
    \includegraphics[height=3.15cm, width=4.1cm]{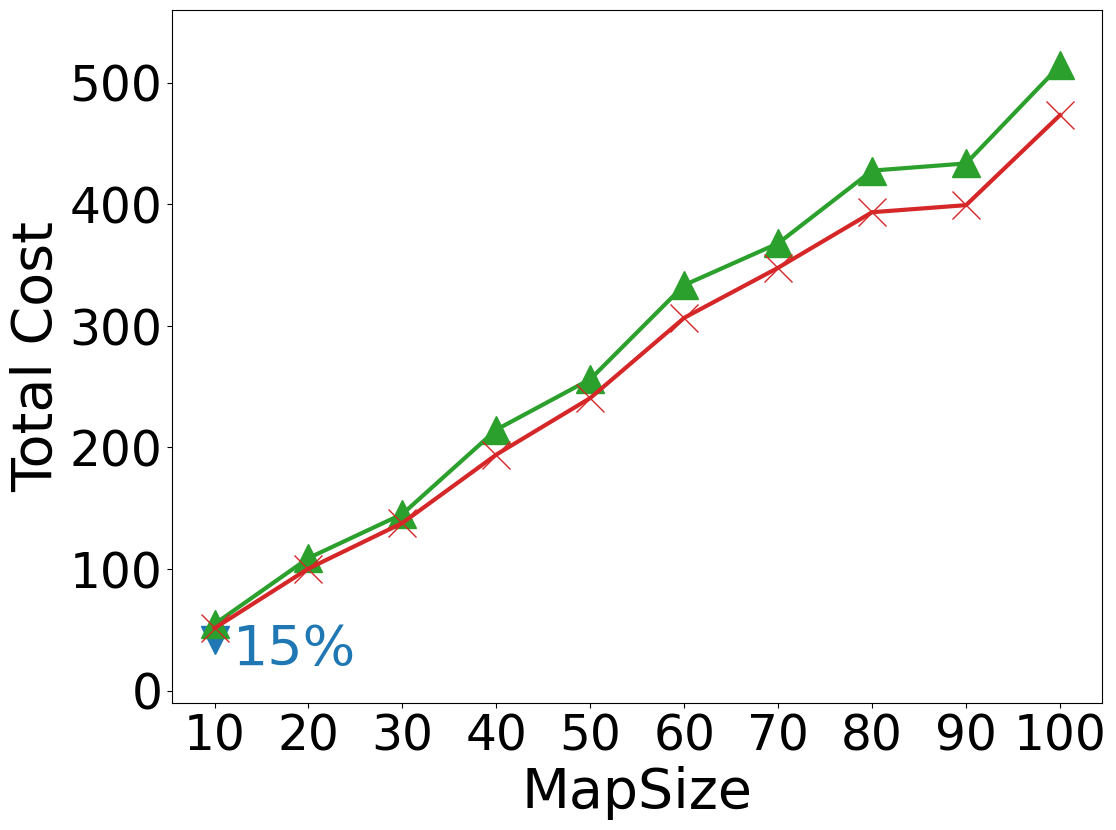}
}
{
    \includegraphics[height=3.15cm, width=4.1cm]{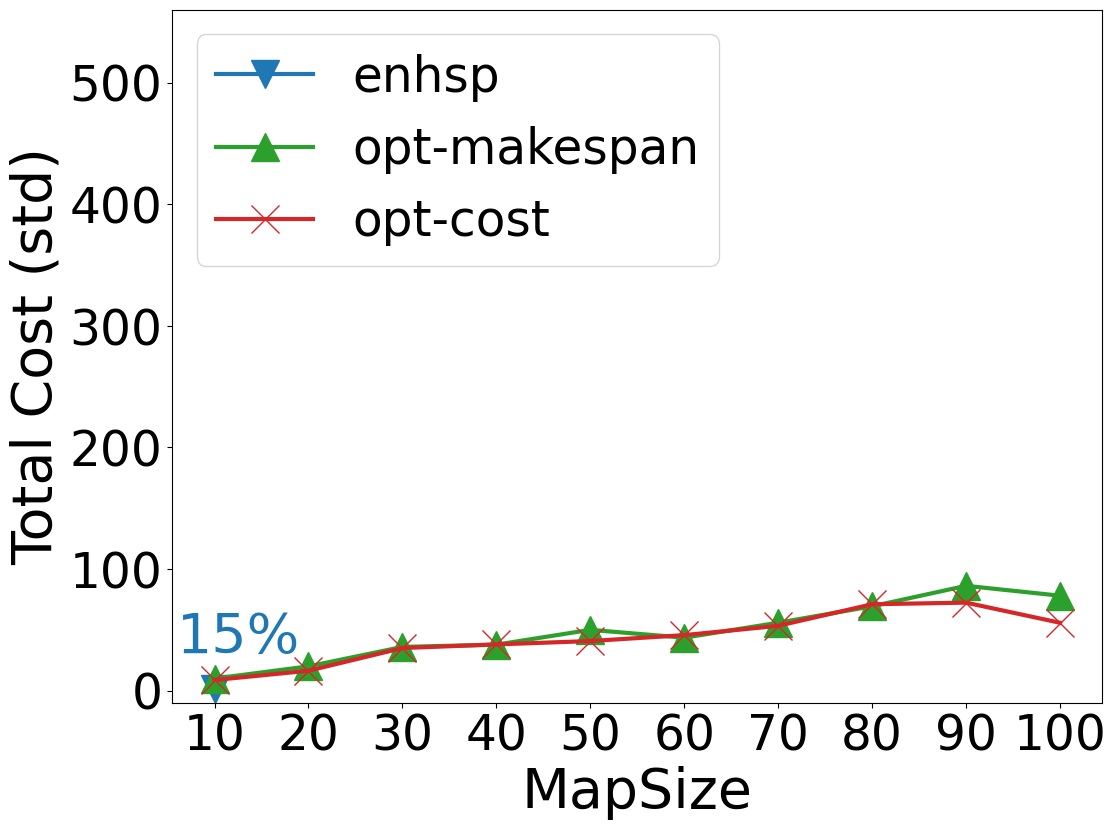}
}
\caption{Comparison of various planners (shown in legends) for varying workspace size on Total Cost. Mean (left) and Standard deviation (right).}
\label{fig:itmp_varyw_tc} 
\end{flushleft}
\end{figure}

\subsubsection{Comparison for varying workspace size}
In this evaluation, we experiment with $2$ robots and $2$ tasks with $Z=5$ for varying workspace sizes ranging from $10 \times 10$ to $100 \times 100$. Figure \ref{fig:itmp_varyw_ct} shows the computation time for varying map sizes for our planners and the ENHSP-20 planner. Our planners are able to solve all the problems in less than a few seconds. The ENHSP planner was able to solve $15\%$ of the problems for the smallest $10 \times 10$ map and was unable to solve any problem with a larger map size. 
Figure~\ref{fig:itmp_varyw_ms} and Figure ~\ref{fig:itmp_varyw_tc} presenting the makespan and the total cost, respectively, is as per the expectations, showing a linear increase in makespan and total cost respectively with an increase in map size.

\begin{figure}[t]
\begin{flushleft}
{
    \includegraphics[height=3.15cm, width=4.1cm]{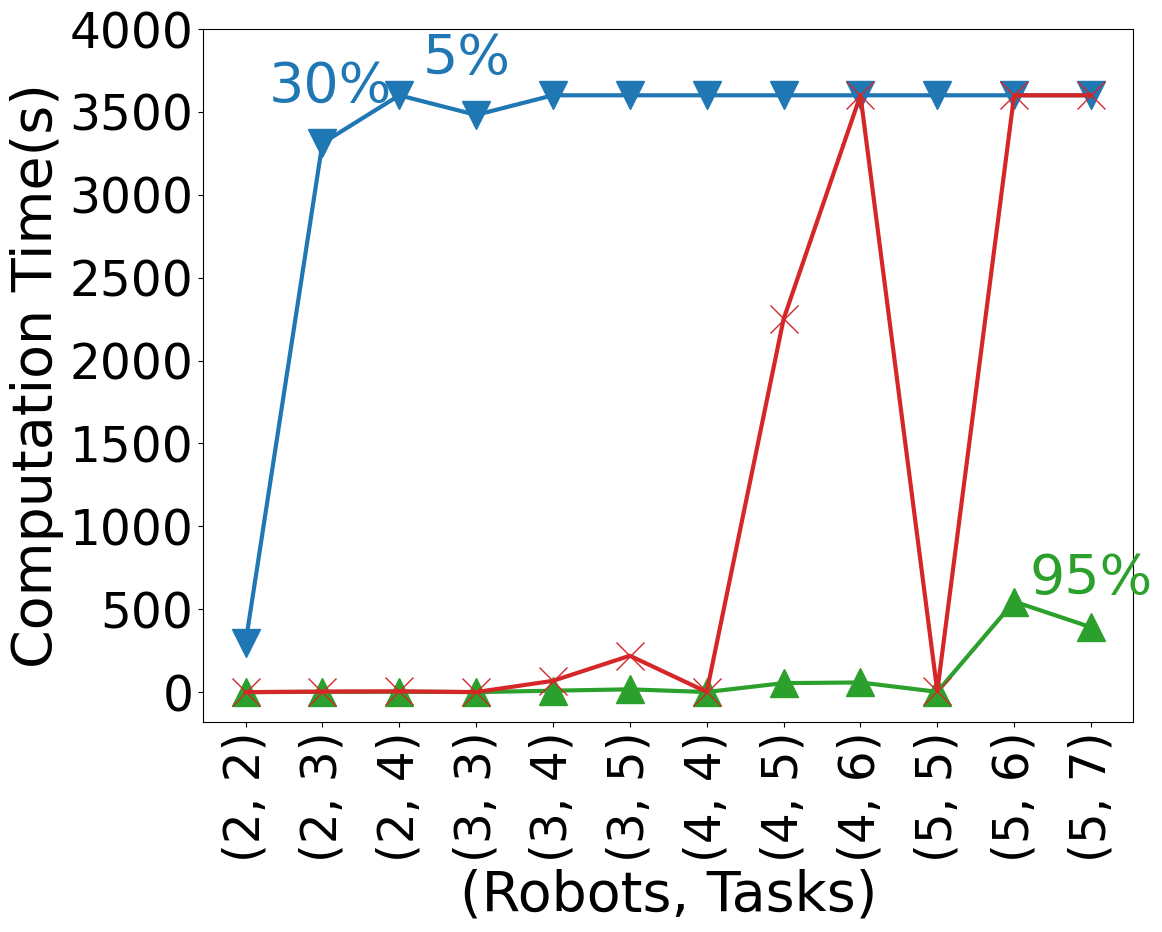}
}
{
    \includegraphics[height=3.15cm, width=4.1cm]{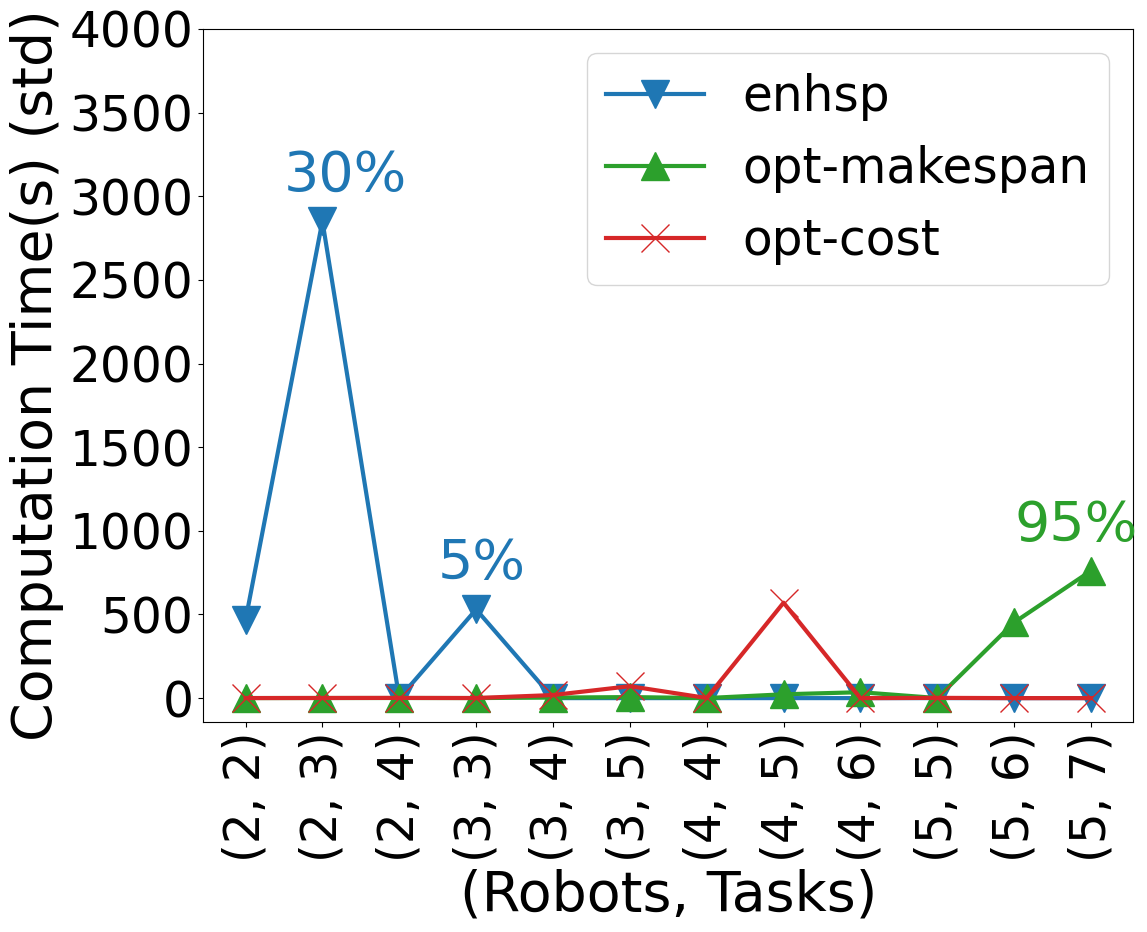}
} 
\caption{Comparison of various planners (shown in legends) for varying Robots and Tasks without Collaboration on Computation Time. Mean (left) and Standard deviation (right).}
\label{fig:itmp_varyrt_ct} 
\end{flushleft}
\end{figure}

\begin{figure}[t]
\begin{flushleft}
{
    \includegraphics[height=3.15cm, width=4.1cm]{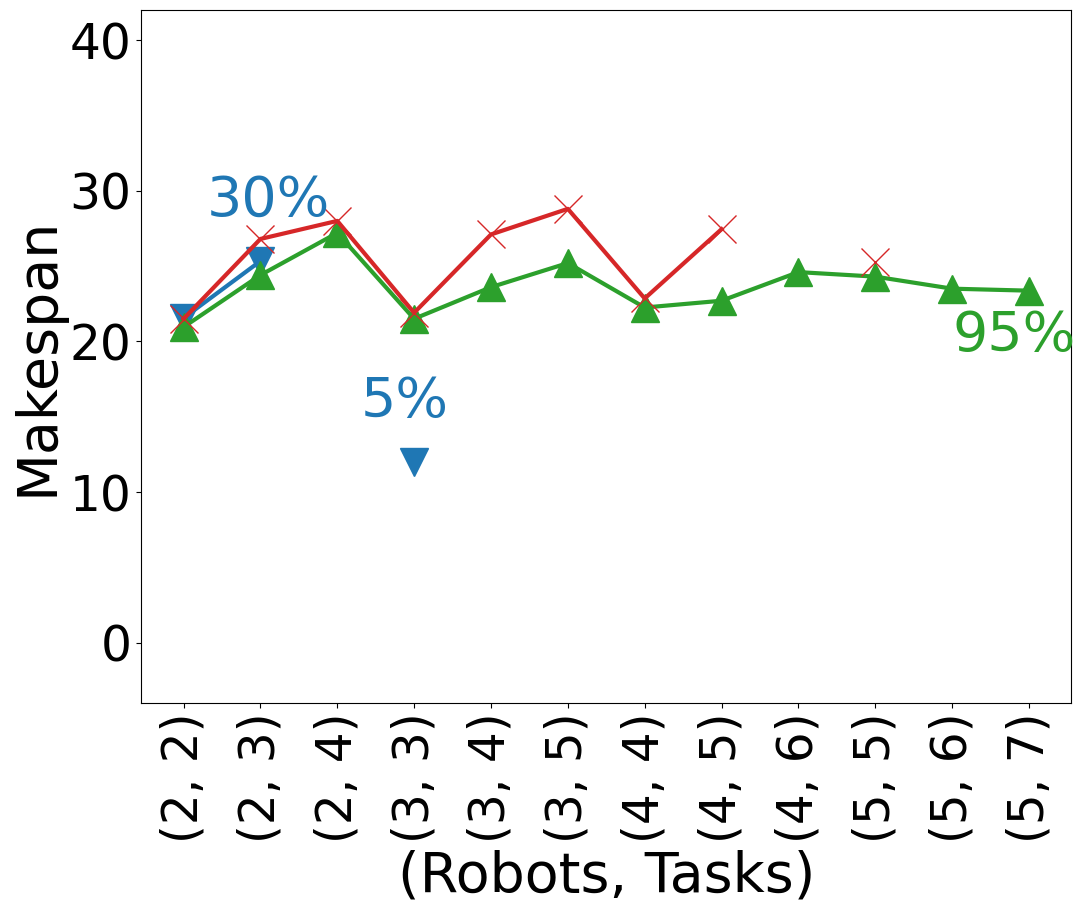}
}
{
    \includegraphics[height=3.15cm, width=4.1cm]{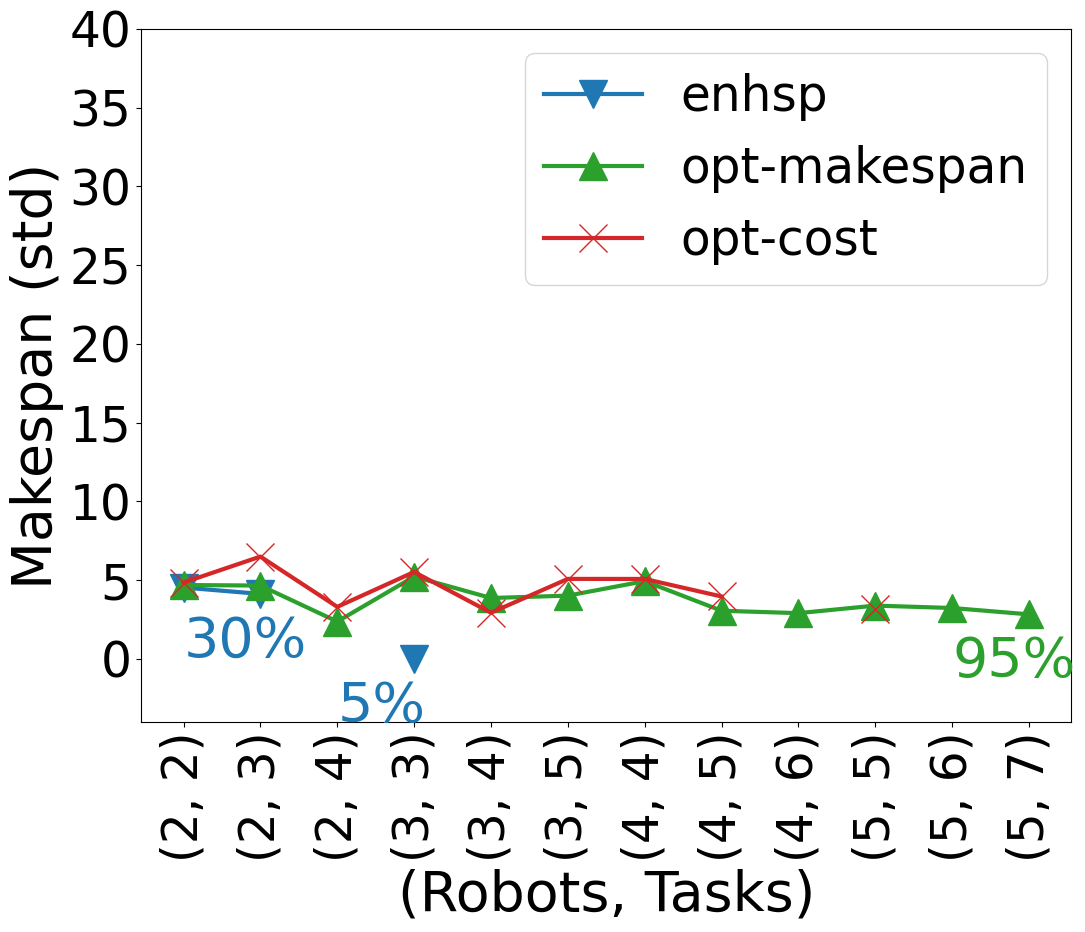}
} 
\caption{Comparison of various planners (shown in legends) for varying Robots and Tasks without Collaboration on Makespan. Mean (left) and Standard deviation (right).}
\label{fig:itmp_varyrt_ms} 
\end{flushleft}
\end{figure}

\begin{figure}[t]
\begin{flushleft}
{
    \includegraphics[height=3.15cm, width=4.1cm]{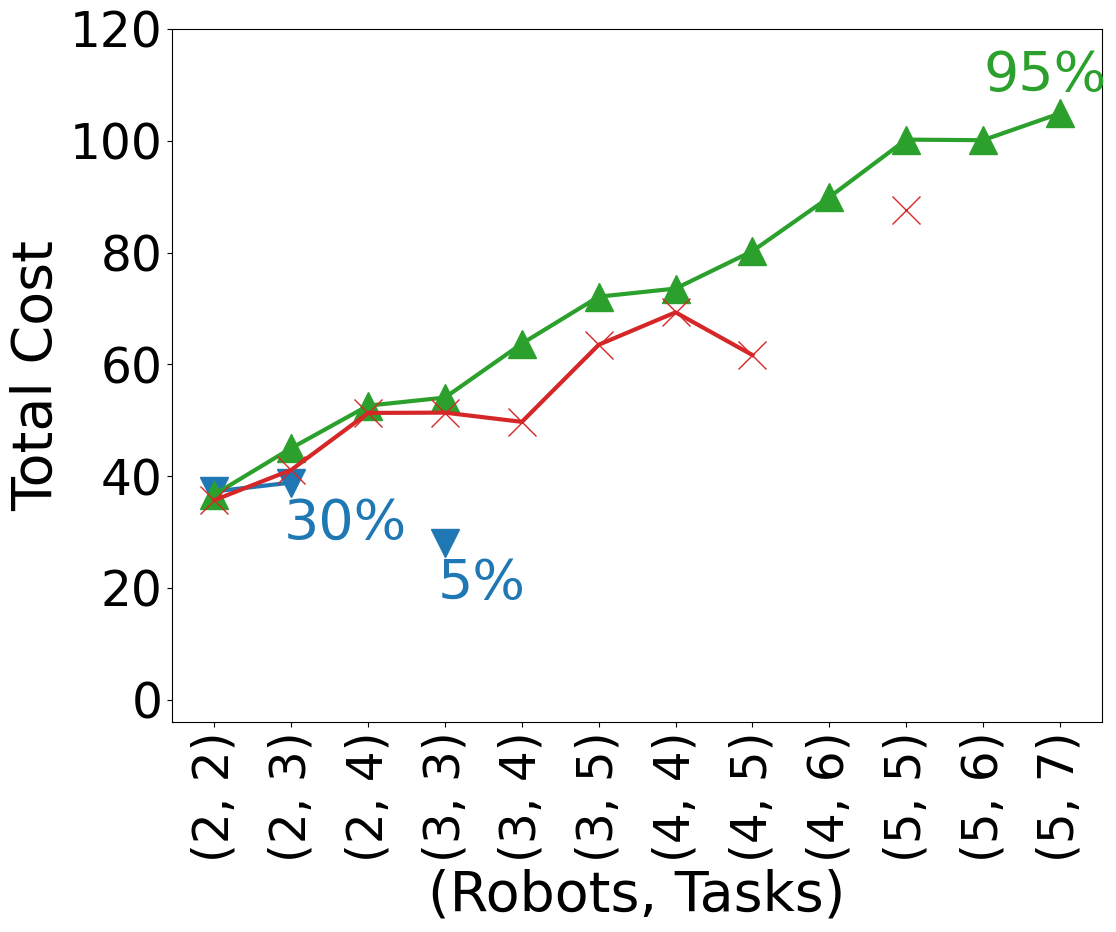}
}
{
    \includegraphics[height=3.15cm, width=4.1cm]{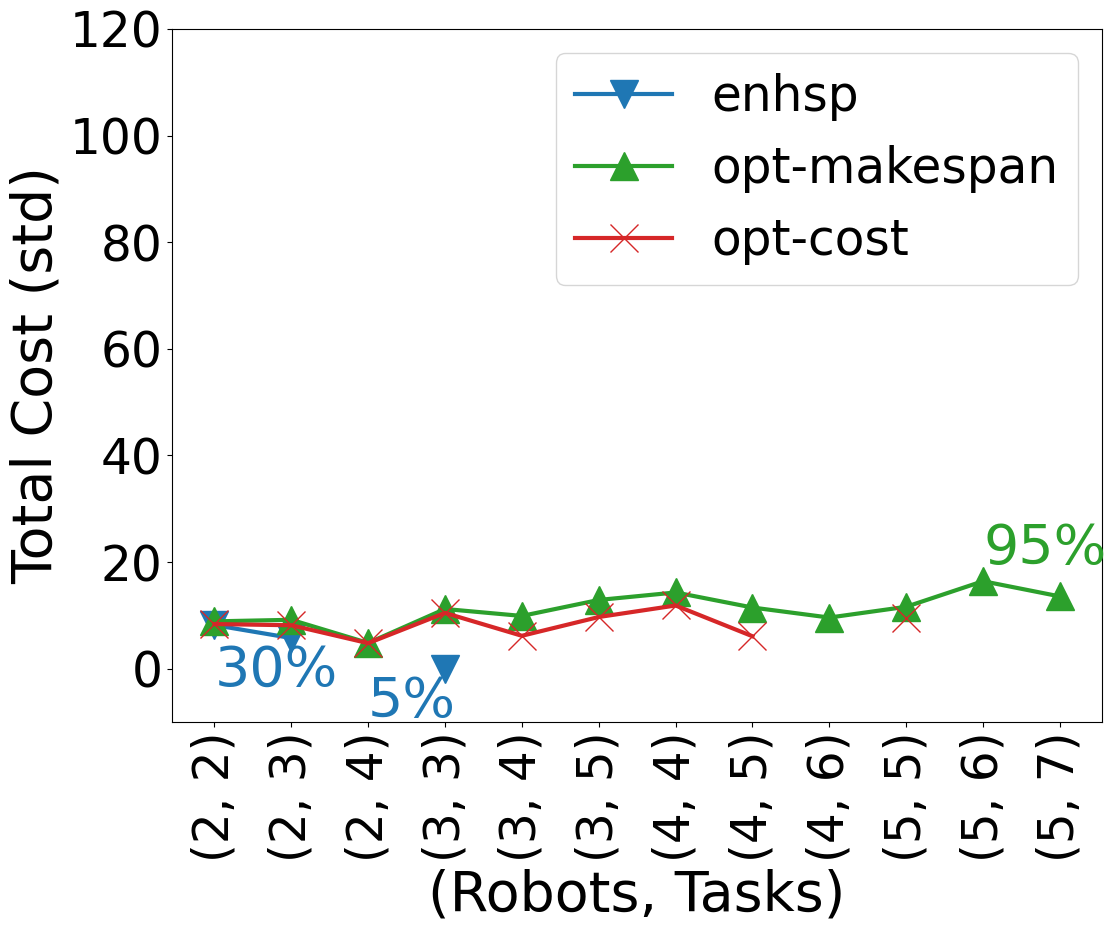}
} 
\caption{Comparison of various planners (shown in legends) for varying Robots and Tasks without Collaboration on Makespan. Mean (left) and Standard deviation (right).}
\label{fig:itmp_varyrt_tc} 
\end{flushleft}
\end{figure}

\subsubsection{Comparison for varying Robots and Tasks without Collaboration}
From the previous evaluation, we observe that the classical planner cannot solve problems for map size more than $10 \times 10$. So, in this experiment, we use maps of size  $9 \times 9$. 
We experiment with 2 to 5 robots and the number of tasks ranging from 2 to 7. Since we aim for a load-balanced solution, we use a minimum satisfiable $Z$ as it forces every robot to perform some work. 
Figure~\ref{fig:itmp_varyrt_ct} shows the computation time for varying number of robots and tasks. The classical planner cannot solve any problem for more than 3 robots. Even for 3 robots, it can solve some of the problem instances. On the other hand, our planners perform significantly better compared to the classical planner. As optimizing total cost is harder for our planner, it starts facing timeout for 6 tasks. Our planner with makespan optimization solves almost all of the problems. It faces timeout for 5\% of the cases for 5 robots and tasks.
Figure~\ref{fig:itmp_varyrt_ms} and Figure ~\ref{fig:itmp_varyrt_tc} denotes a change in makespan and total cost with varying number of robots and tasks. From the plots, we observe that the opt-makespan planner produces better plans than others. Optimizing makespan is more scalable compared to other planners.

\begin{figure}[t]
\begin{flushleft}
{
    \includegraphics[height=3.15cm, width=4.1cm]{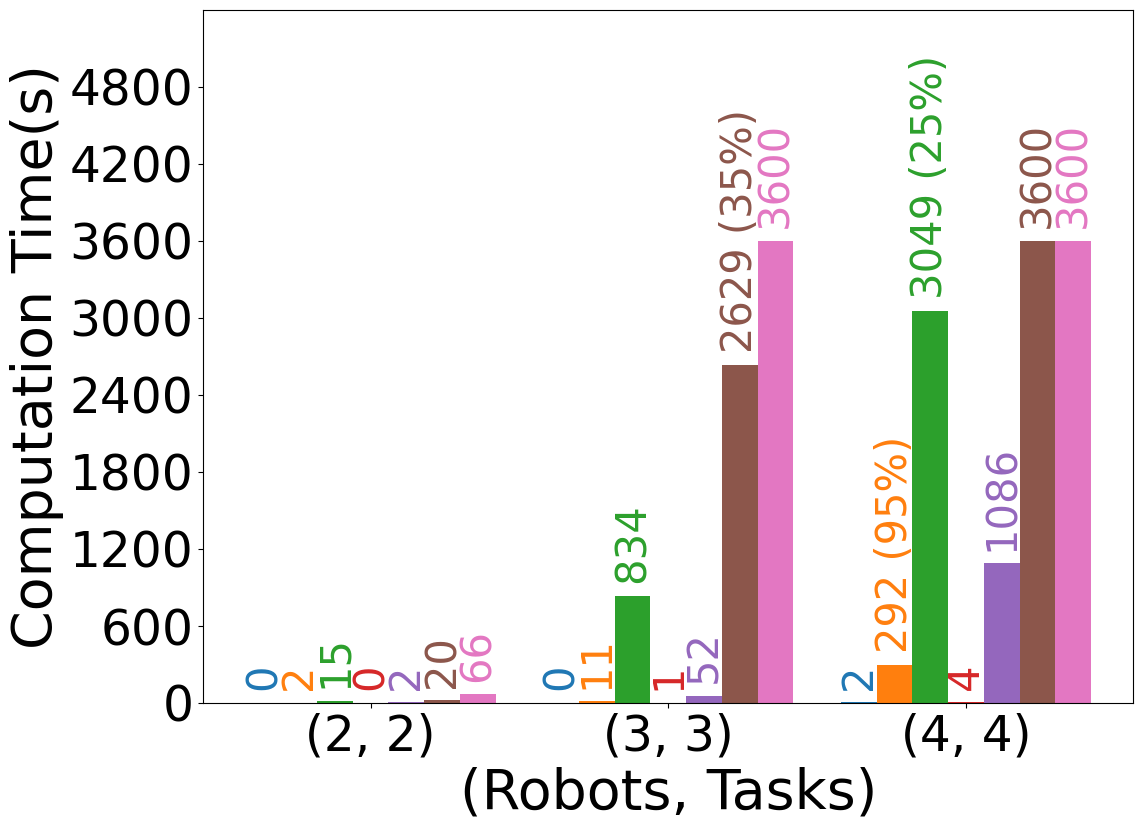}
}
{
    \includegraphics[height=3.15cm, width=4.1cm]{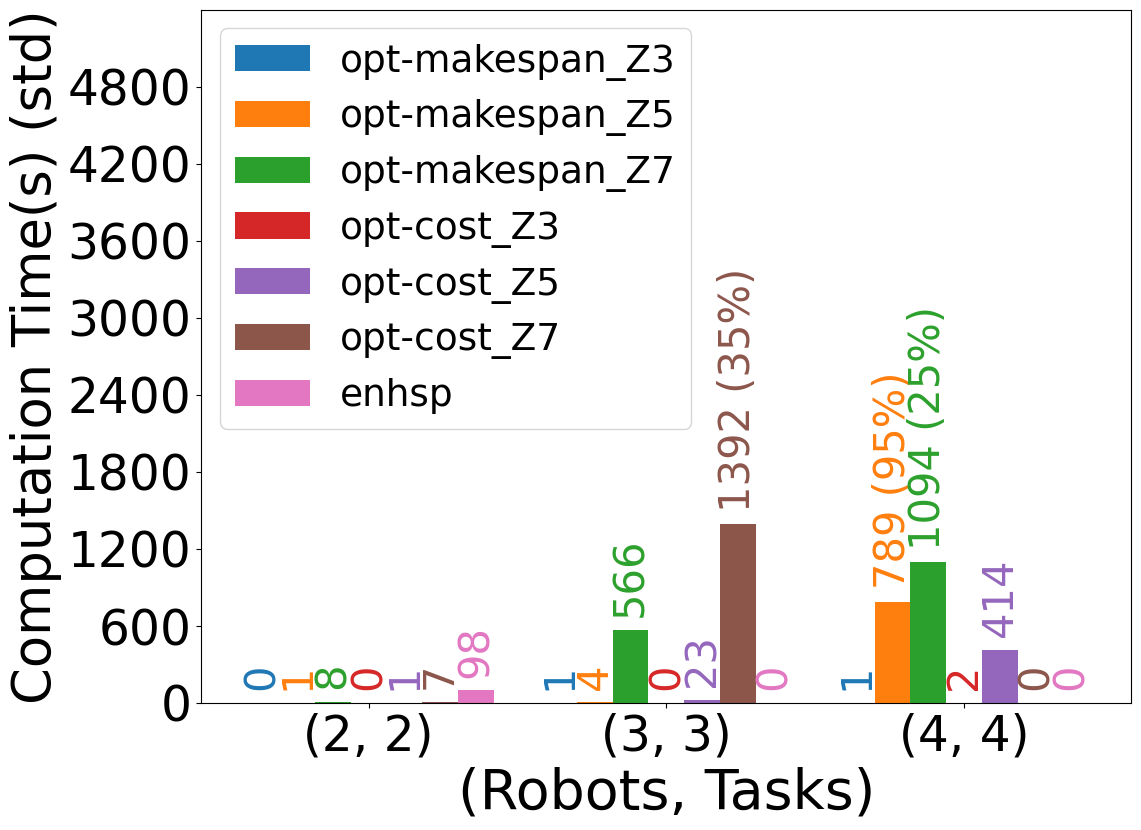}
} 
\caption{Comparison of various planners(legends) for varying Robots and Tasks with Collaboration on Computation Time. Mean (left) and Standard deviation (right).}
\label{fig:itmp_varyrt_collab_ct} 
\end{flushleft}
\end{figure}

\begin{figure}[t]
\begin{flushleft}
{
    \includegraphics[height=3.15cm, width=4.1cm]{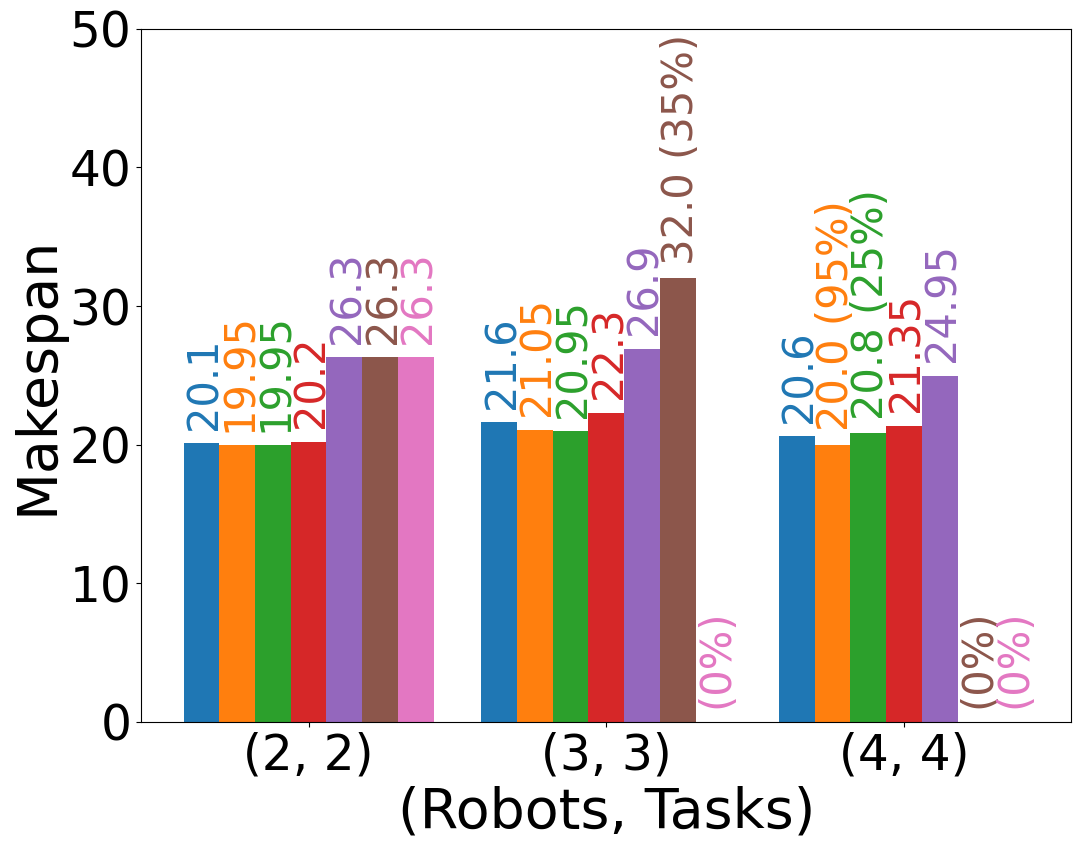}
}
{
    \includegraphics[height=3.15cm, width=4.1cm]{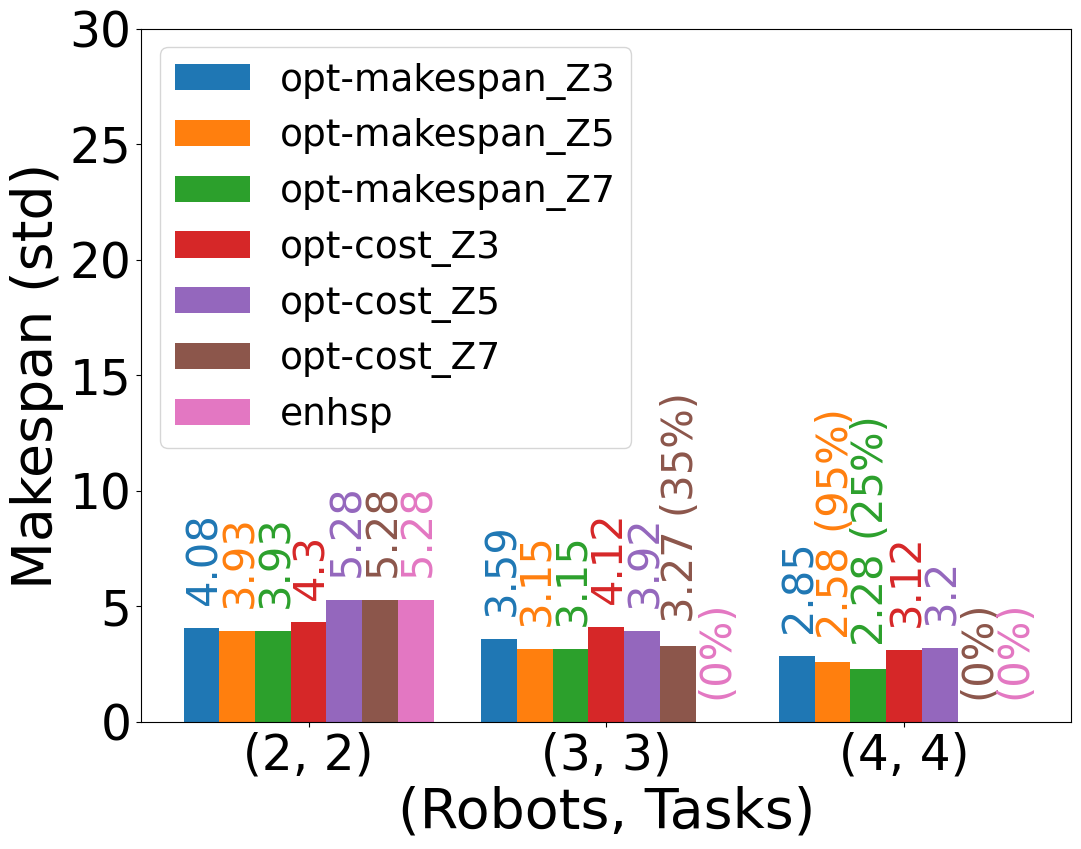}
} 
\caption{Comparison of various planners(legends) for varying Robots and Tasks with Collaboration on Makespan. Mean (left) and Standard deviation (right).}
\label{fig:itmp_varyrt_collab_ms} 
\end{flushleft}
\end{figure}

\begin{figure}[t]
\begin{flushleft}
{
    \includegraphics[height=3.15cm, width=4.1cm]{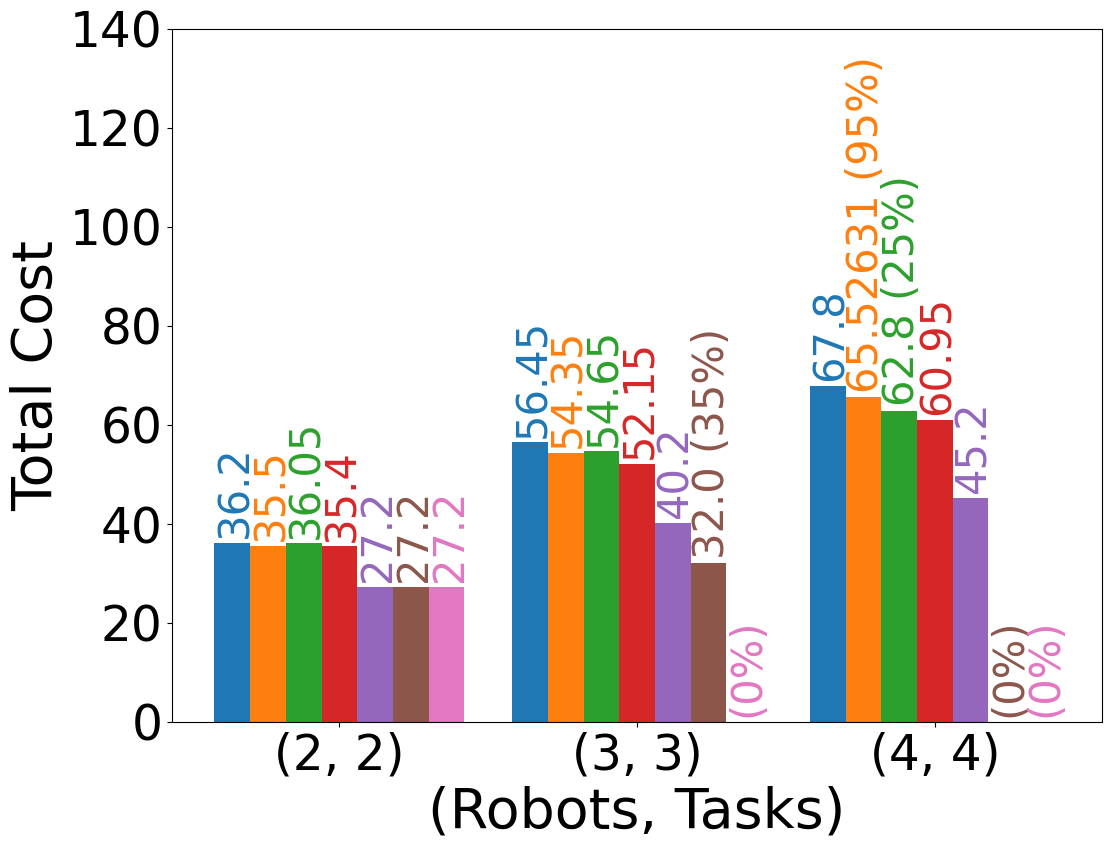}
}
{
    \includegraphics[height=3.15cm, width=4.1cm]{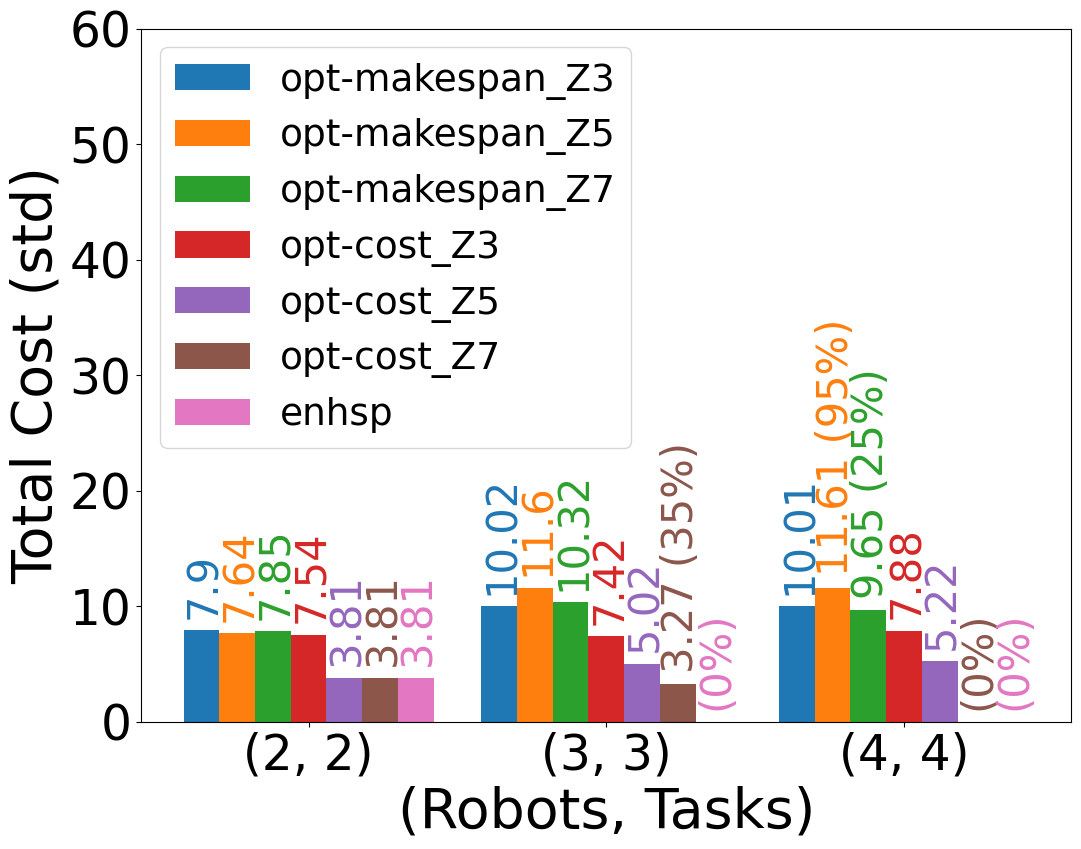}
} 
\caption{Comparison of various planners(legends) for varying Robots and Tasks with Collaboration on Total Cost. Mean (left) and Standard deviation (right).}
\label{fig:itmp_varyrt_collab_tc} 
\end{flushleft}
\end{figure}

\subsubsection{Comparison for varying Robots and Tasks with Collaboration}
We perform these experiments with a setup similar to the previous one, but we add some intermediate locations in the maps (randomly for randomly generated maps and predefined for predefined maps). We execute the planner with both the optimization criteria for multiple values of $Z$. We label our planner as $\mathtt{opt-makespan\_ZN}$ and $\mathtt{opt-cost\_ZN}$ in the plots, where $N$ denotes the value of $Z$. A value of $Z$=3 implies no collaboration; with a higher value of $Z$, the opportunity for intermediate pickup and drop arises. 
Figure~\ref{fig:itmp_varyrt_collab_ct} represents the computation times for various numbers of robots and tasks, and $Z$. For each robot and task, the computation time increases drastically for each increase in $Z$ for our planner. Our planner cannot solve all the problems for 4 robots and 4 tasks with $Z=7$. However, our planners are able to solve more problems faster compared to the classical planner. Figure~\ref{fig:itmp_varyrt_collab_ms} and Figure ~\ref{fig:itmp_varyrt_collab_tc} show the change in makespan and total cost for varying numbers of robots and tasks. Higher $Z$ values improve makespan for makespan optimization and total cost for total cost optimization. Also, our planners are able to generate better or equivalent plans compared to the classical planner.

\subsubsection{Additional Results}

\begin{figure}[t]
\begin{flushleft}
{
    \label{fig:itmp_maxrt_ct}
    \includegraphics[height=3.15cm, width=4.1cm]{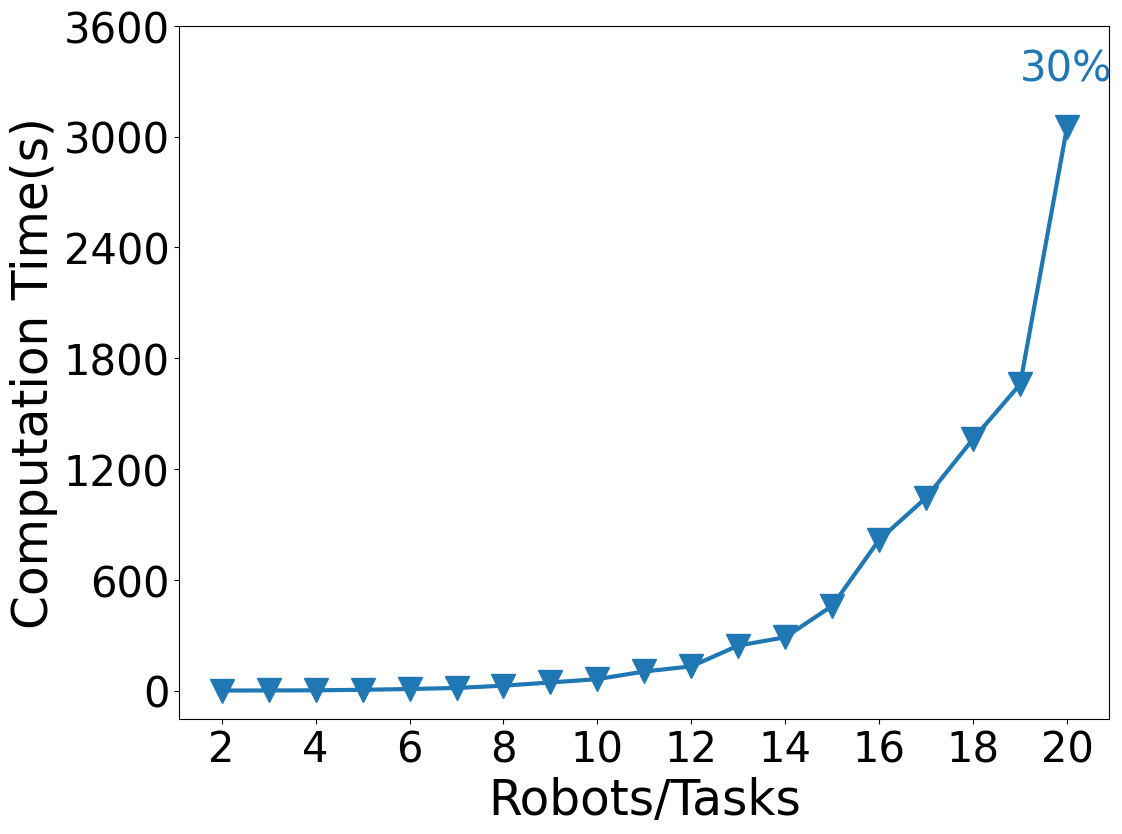}
}
{
    \label{fig:itmp_maxrt_ms}
    \includegraphics[height=3.15cm, width=4.1cm]{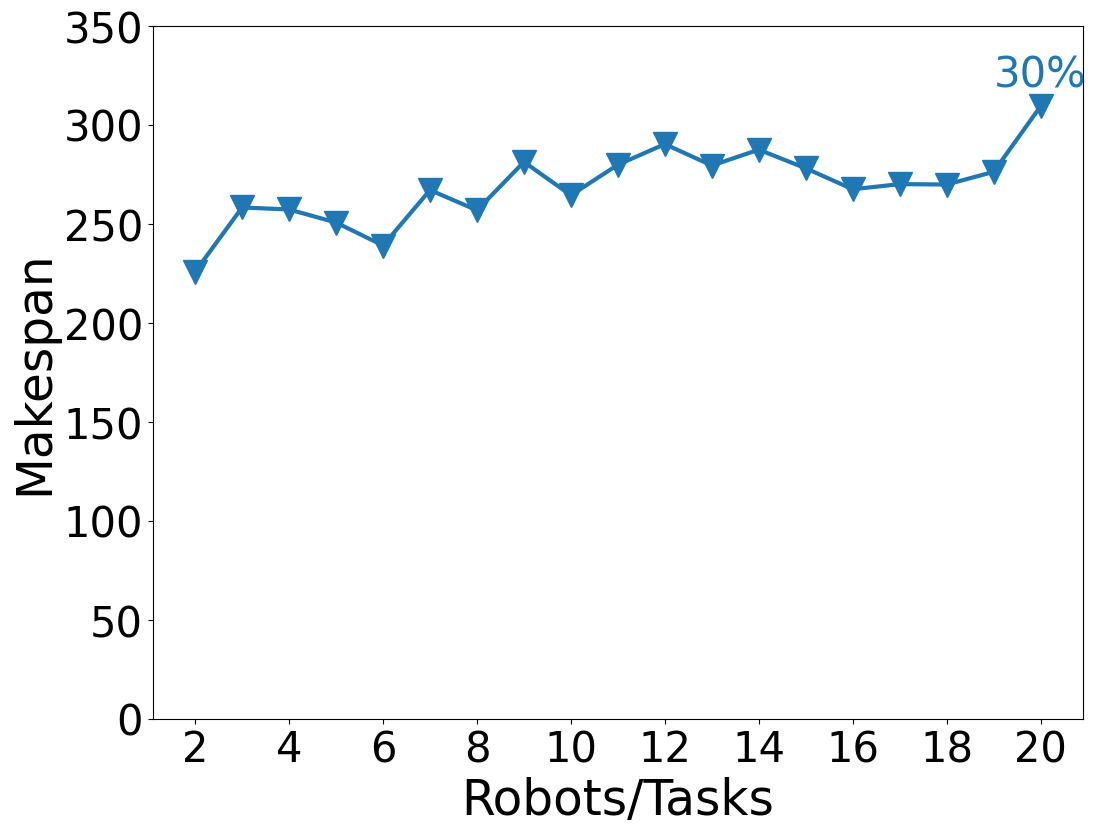}
} 
\caption{Varying Robots and Tasks without Collaboration for Integrated Task and Path Planner with makespan optimization criteria on Computation Time (left) and Makespan (right)}
\label{fig:itmp_maxrt} 
\end{flushleft}
\end{figure}

We also evaluate our algorithm for $N$ robots and $N$ tasks, where $N$ ranges from 2 to 20 for a $100 \times 100$ workspace to determine the scalability of our algorithm. Figure ~\ref{fig:itmp_maxrt}a and ~\ref{fig:itmp_maxrt}b represents the computation time and makespan for varying number of robots and tasks. Our planner can successfully execute upto 19 robots with 19 tasks without experiencing failures for a timeout of 3600s. We also evaluated the time distribution between task and path planner. On an average, the task planner consumes more than 98$\%$ of the total computation time. As the task planner explores a large search space to find the sequence of actions, the combinatorial explosion of possibilities makes the search exponentially large.  
Note that, in the some plots representing makespan, for some cases the average makespan for the ENHSP planner is slightly less than our planner. This is due to the fact that they are accumulated from the solved instances only, which are less in number.

}
\shortversion{\section{Evaluation}
We evaluate our planning methodology on various instances of warehouse pick-and-drop application scenarios.

\subsection{Experimental Setup}

For all our experiments, we use a desktop computer with an i7-4770 processor with 3.90\,GHz frequency and 12\,GB of memory. We use \textsf{Z3} SMT solver~\cite{Z3_Moura} from Microsoft Research to solve task-planning problems. For MA*-CBS-PC, we adapt the C++ code provided for \cite{zhang2022multi} with appropriate modifications.
The source code of our implementation is available at \url{https://github.com/iitkcpslab/Opt-ITPP}.

For any data point, we take the average of the results for multiple generated scenarios where the initial location of the robots and the task locations are generated randomly.
For each experiment, we create 20 different problem instances using predefined as well as randomly generated maps and report the mean result. We present an example for both types of maps in Section V-A and detailed results with standard deviations in Section V-C of~\cite{optITMPjournal}.


In our experiments, we consider two planners: one optimizes the makespan (opt\_makespan), and the other optimizes the total cost (opt\_cost). We compare our planners with a state-of-the-art classical planner ENHSP-20~\cite{scala2020subgoaling} as it supports numeric values required for capacities and deadlines and provides optimal solutions in terms of total cost.
We explored the possibility of modeling our problem as a constrained TSP problem and utilizing the meta-heuristic algorithm LKH3~\cite{helsgaun2017extension} to get a near-optimal solution. 
However, we did not find any extension of LKH3 that can deal with all the constraints we consider in our problem.
On the other hand, it was quite straightforward to model our exact problem in SMT as well as in ENHSP-20.

In our result plots, in all the instances where time is $3600\si{\second}$, the planner experiences a timeout. We include success percentages as annotations wherever the planner could not solve all the problems. In the plots, for all the cases where the planner faces a timeout,
we take its computation time as $3600\si{\second}$ and the metric value as the average of the values for the instances the planner can solve successfully.

\subsection{Results}

\begin{figure*}[t]
\begin{center}
{
    \label{fig:itmp_varyW_ct}
    \includegraphics[scale=0.175]{images/ITMP_PLOTS/VARYW/ComputationTime_nolegend.png}
}
{
    \label{fig:itmp_varyw_ms}
    \includegraphics[scale=0.175]{images/ITMP_PLOTS/VARYW/Makespan_nolegend.png}
}
{
    \label{fig:itmp_varyw_ms}
    \includegraphics[scale= 0.175]{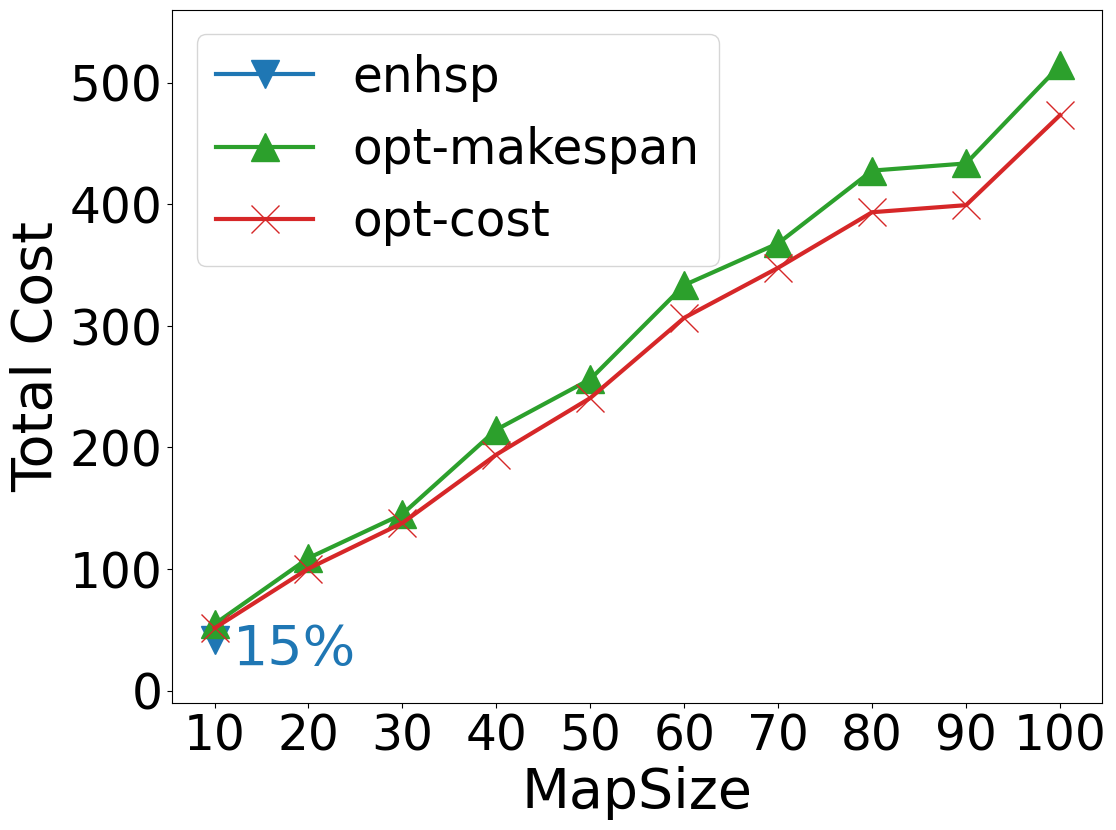}
}
\caption{Comparison of various planners for varying workspace size on (a) Computation Time (left), (b) Makespan (middle) and *c) Total Cost (right)}
\label{fig:itmp_varyw} 
\end{center}
\end{figure*}



\subsubsection{Comparison for varying workspace size}
In this evaluation, we experiment with $2$ robots and $2$ tasks with $Z=5$ for varying workspace sizes ranging from $10 \times 10$ to $100 \times 100$. Figure~\ref{fig:itmp_varyw}a shows the computation time for varying map sizes for our planners and  ENHSP-20 planner. Both our planners solve all the problems in  a few seconds. The ENHSP planner can solve $15\%$ of the problems for the smallest $10 \times 10$ map and is unable to solve any problem with a larger map size. Figures~\ref{fig:itmp_varyw}b and ~\ref{fig:itmp_varyw}c, presenting the makespan and total cost for all the three planners respectively, are as per the expectations, showing a linear increase in makespan and total cost with increase in map size.


\begin{figure*}[t]
\centering
{
    \label{fig:itmp_varyrt_ct}
    \includegraphics[scale=0.175]{images/ITMP_PLOTS/VARYRT/ComputationTime_nolegend.png}
}
{
    \label{fig:itmp_varyrt_ms}
    \includegraphics[scale=0.175]{images/ITMP_PLOTS/VARYRT/Makespan_nolegend.png}
} 
{
    \label{fig:itmp_varyrt_ms}
    \includegraphics[scale=0.175]{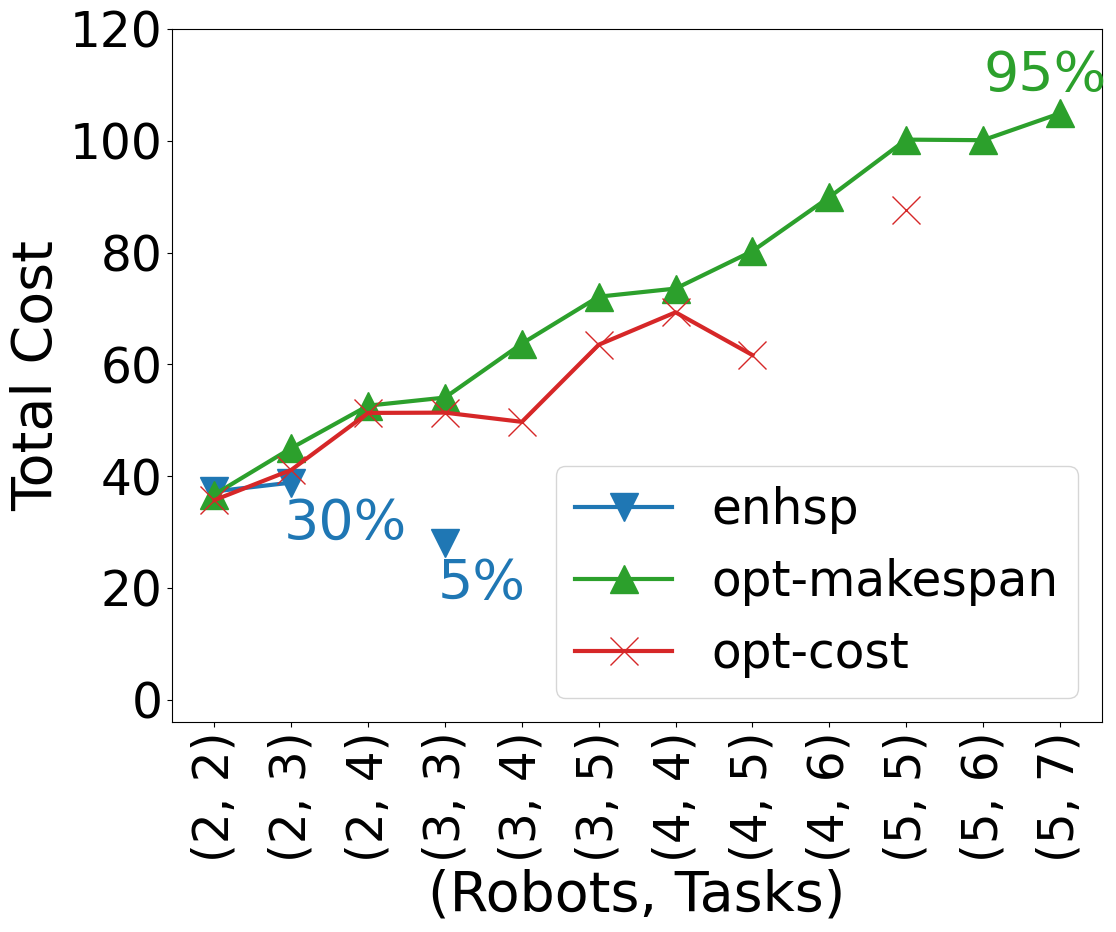}
} 
\caption{Comparison of various planners for varying number of Robots and Tasks without Collaboration on a) Computation Time (left), b) Makespan (middle) and c) Total Cost (right)}
\label{fig:itmp_varyrt} 
\end{figure*}



\subsubsection{Comparison for varying Robots and Tasks without Collaboration}
From the previous evaluation, we observe that the classical planners cannot solve problems for map size more than $10 \times 10$. So, in this experiment, we use maps of size  $9 \times 9$. 
We experiment with 2 to 5 robots and the number of tasks ranging from 2 to 7. Since we aim for a load-balanced solution, we use a minimum satisfiable $Z$ as it forces every robot to perform some work. 
Figure~\ref{fig:itmp_varyrt}a shows the computation time for different numbers of robots and tasks. The classical planner cannot solve any problem for more than 3 robots. Even for 3 robots, it can solve only some of the problem instances. Our planners perform significantly better compared to the classical planner. As optimizing total cost is harder for our planner, it starts facing timeout for 6 tasks. Our planner with makespan optimization can solve almost all of the problems. It faces timeout for 5\% of the cases for 5 robots and tasks.
Figures~\ref{fig:itmp_varyrt}b and ~\ref{fig:itmp_varyrt}c show how the makespan and the total cost vary with varying numbers of robots and tasks.


\begin{figure*}[t]
\begin{center}
{
    \label{fig:itmp_varyrt_collab_ct}
    \includegraphics[scale=0.175]{images/ITMP_PLOTS/VARYRTCOLLAB/ComputationTime_nolegend.png}
}
{
    \label{fig:itmp_varyrt_collab_ms}
    \includegraphics[scale=0.175]{images/ITMP_PLOTS/VARYRTCOLLAB/Makespan_nolegend.png}
} 
{
    \label{fig:itmp_varyrt_collab_ms}
    \includegraphics[scale=0.175]{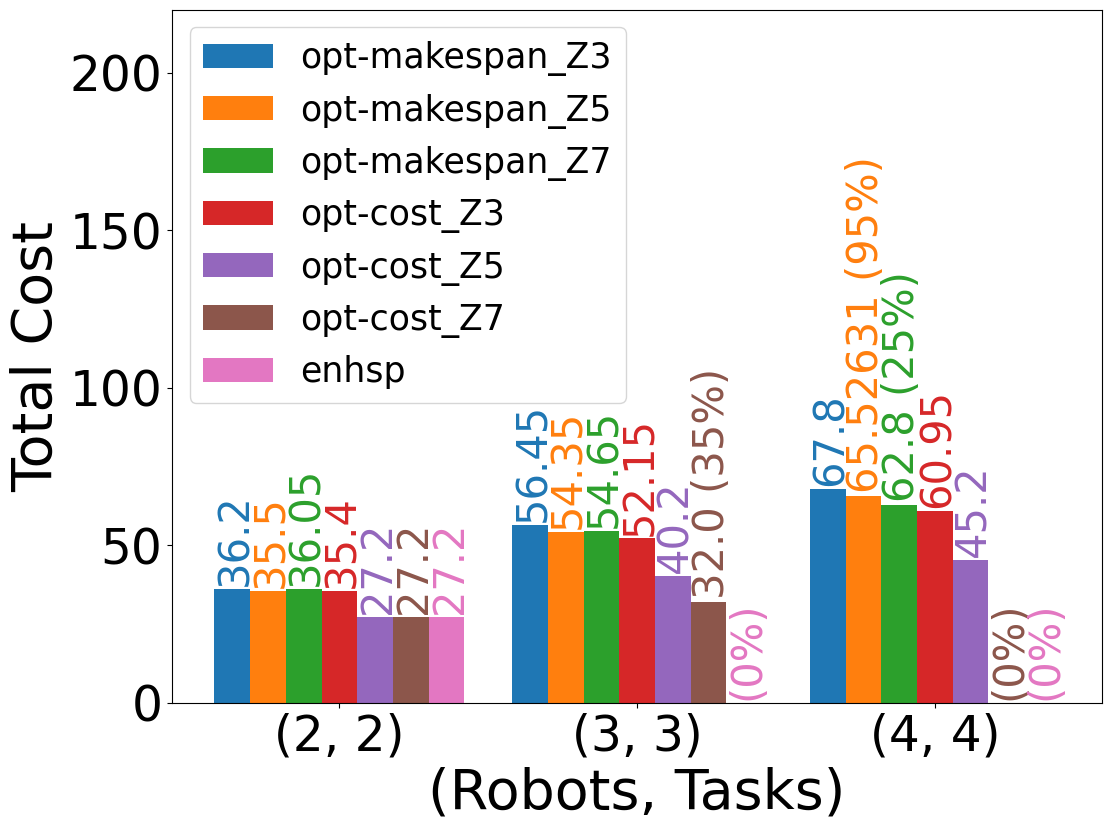}
}
\caption{Comparison of various planners(legends) for varying number  of Robots and Tasks with Collaboration on a) Computation Time (left), b) Makespan (middle) and c) Total Cost (right)}
\label{fig:itmp_varyrt_collab} 
\end{center}
\end{figure*}



\subsubsection{Comparison for varying Robots and Tasks with Collaboration}
We perform these experiments with a setup similar to the previous one, but we add some intermediate locations in the maps (randomly for randomly generated maps and predefined for predefined maps). We execute the planner with both the optimization criteria for multiple values of $Z$. We label our planner as $\mathtt{opt-makespan\_ZN}$ and $\mathtt{opt-cost\_ZN}$ in the plots, where $N$ denotes the value of $Z$. A value of $Z$=3 implies no collaboration; with a higher value of $Z$, the opportunity for intermediate pickup and drop arises. 
Figure~\ref{fig:itmp_varyrt_collab}a represents the computation times for various numbers of robots and tasks, and $Z$. For each robot and task, the computation time increases drastically for each increase in $Z$ for our planner. Our planner cannot solve all the problems for 4 robots and 4 tasks with $Z$=7. However, our planners are able to solve more problems faster compared to the classical planner. Figures ~\ref{fig:itmp_varyrt_collab}b and ~\ref{fig:itmp_varyrt_collab}c show the change in makespan and total cost for varying numbers of robots and tasks. Higher $Z$ values improve makespan for makespan optimization and total cost for total cost optimization. Also, our planners are able to generate better or equivalent plans compared to the classical planner.

\subsubsection{Additional Results}

We present a detailed evaluation of our Task Planner and some additional experimental results for our Integrated Planner in \cite{optITMPjournal}. From the evaluation of our Task Planner, we observe that computation time increases almost linearly with an increase in map size. But for varying robots and tasks, for each increase in Z, we observe a major increase in computation time. Our Task Planner performs better for makespan optimization criteria compared to the total cost. We evaluate our algorithm for N robots and  N tasks, where N ranges from 2 to 20 for a $100 \times 100$ workspace. Our planner with makespan optimization can successfully execute up to 19 robots with 19 tasks without experiencing a timeout ($3600\si{\second}$). We also evaluated the time distribution between the task and path planner. On average, the task planner consumes more than 98$\%$ of the total computation time. 
As the task planner explores a large search space to find the sequence of actions, The combinatorial explosion of possibilities makes the search challenging.

Note that, for some cases, the average makespan and total cost for the ENHSP planner is slightly less than our planners. This is due to the fact that they are accumulated from the solved instances only, which are less in number.

}
\section{Related Work}
\label{sec-related}

In this section, we briefly describe the related work in the domain of task and path planning for multi-robot systems. 
Several classical planners have been developed to solve task planning problems described in the popular multi-agent task specification language  MA-PDDL~\cite{Kovacs12}.
Leofante et al.~\cite{LeofanteANLT17} proposed an SMT-based mechanism to solve the multi-robot task scheduling problem in a logistic planning scenario that focuses on a simple objective involving only one state variable. In contrast, our SMT formulation considers multiple state variables for the robots and tasks to make it generic to handle complex scenarios.

Many previous papers have addressed the multi-agent path finding problem. 
Two prominent algorithms use A* search algorithm~\cite{HartNR68} for individual agents and rely on subdimensional expansion ($\textsf{M*} $~\cite{WagnerC11}) or constraint search tree (\textsf{CBS} ~\cite{sharon2015conflict}) to generate collision-free paths. Another approach with the SMT solver's capability to generate an unsatisfiable core is utilized to assign priorities to the robots to avoid any potential deadlock situation ~\cite{SahaR0PS16}. All these papers rely on task assignments from some other algorithm.

Several authors have presented algorithmic solutions for finding optimal task assignments and the corresponding collision-free paths for multi-robot applications.
Concurrent goal assignment and planning problem has been addressed by Turpin et al. for obstacle-free environments~\cite{TurpinMK14} and in the environment cluttered with obstacles~\cite{TurpinMMK14} without a guarantee of optimality.
On the other hand, the optimal goal assignment and the collision-free path-finding problem have been addressed in~\cite{MaK16,honig2018conflict,brown2020optimal}.
Recently, Okumura and D{\'e}fago have proposed a sub-optimal but fast algorithm for simultaneous target assignment and path planning efficiently for a large-scale multi-robot system.
Though the goal assignment is a form of task assignment, it is beyond the scope of these algorithms to deal with complex constraints (e.g., payload capacity, task deadline) for the robots or the possibility of robot-robot collaboration. 
Though the problem of transferring payloads in packet transfers~\cite{DBLP:conf/aaai/MaTSKK16} and deadline-aware planning~\cite{DBLP:conf/ijcai/0001WFLKK18} in a multi-agent environment have been studied, the proposed solutions apply to the very specific problems.
Several authors have presented mechanisms to solve the integrated task and path planning problem for multi-robot systems, where the task specifications are given using linear temporal logic~\cite{UlusoySDBR13, KantarosZ20,GujarathiS22}. These methods are either not scalable~\cite{UlusoySDBR13} or compromise on finding collision-free paths to achieve scalability~\cite{KantarosZ20,GujarathiS22}.


Several researchers have focused on the multi-robot pickup and delivery problem. Michal et al.~\cite{Michal15} provides a distributed algorithm to solve a well-formed multi-agent pickup-delivery problem. Ma et al.~\cite{MaLKK17, liu2019task} provide several algorithms addressing the MAPD problem across online and offline contexts. These approaches perform path planning in two stages, resulting in sub-optimal collision-free trajectories. Our approach employs CBS-PC~\cite{zhang2022multi}, which efficiently computes optimal collision-free trajectory. Though we take the pickup-delivery problem as an application, our SMT-based approach is more general in dealing with many complex constraints in a task planning problem. Some approaches based on Large Neighborhood Search ~\cite{xu2022multi, chen2021integrated} are efficient and scalable. 
However, these algorithms do not guarantee optimality or completeness; in contrast, our approach is complete and optimal. 


\section{Conclusion}
\label{sec-conclusion}

We have presented a generic integrated task and path planning algorithm for multi-robot systems and demonstrated the applicability of this framework on the pickup delivery problem that is at the core of any automated warehouse management system. 
Our planning framework provides an opportunity to combine the strength of an optimal task planner and an optimal path planner to design an optimal planner capable of solving complex multi-robot logistics planning problems which is beyond the scope of the  state-of-the-art multi-agent classical planners.

\longversion{
\section*{ACKNOWLEDGMENT}
\label{sec-acknowledgements}

This research was supported by Max-Plank Society, Germany through a research funding awarded to a partner group between MPI-SWS, Germany and IIT Kanpur, India.

}


\bibliography{references-short}

\begin{thebibliography}{10}
\providecommand{\url}[1]{#1}
\csname url@rmstyle\endcsname
\providecommand{\newblock}{\relax}
\providecommand{\bibinfo}[2]{#2}
\providecommand\BIBentrySTDinterwordspacing{\spaceskip=0pt\relax}
\providecommand\BIBentryALTinterwordstretchfactor{4}
\providecommand\BIBentryALTinterwordspacing{\spaceskip=\fontdimen2\font plus
\BIBentryALTinterwordstretchfactor\fontdimen3\font minus \fontdimen4\font\relax}
\providecommand\BIBforeignlanguage[2]{{%
\expandafter\ifx\csname l@#1\endcsname\relax
\typeout{** WARNING: IEEEtran.bst: No hyphenation pattern has been}%
\typeout{** loaded for the language `#1'. Using the pattern for}%
\typeout{** the default language instead.}%
\else
\language=\csname l@#1\endcsname
\fi
#2}}

\bibitem{CrosbyRP13}
M.~Crosby, M.~Rovatsos, and R.~P.~A. Petrick, ``Automated agent decomposition for classical planning,'' in \emph{ICAPS}, vol.~23, 2013, pp. 46--54.

\bibitem{saha2014}
I.~Saha, R.~Ramaithitima, V.~Kumar, G.~J. Pappas, and S.~A. Seshia, ``Automated composition of motion primitives for multi-robot systems from safe {LTL} specifications,'' in \emph{IROS}, 2014, pp. 1525--1532.

\bibitem{honig2018conflict}
W.~H{\"o}nig, S.~Kiesel, A.~Tinka, J.~Durham, and N.~Ayanian, ``Conflict-based search with optimal task assignment,'' in \emph{AAMAS}, 2018, pp. 757--765.

\bibitem{GavranMS17}
I.~Gavran, R.~Majumdar, and I.~Saha, ``Antlab: {A} multi-robot task server,'' \emph{{ACM} Trans. Embedded Comput. Syst.}, vol.~16, no.~5, pp. 190:1--190:19, 2017.

\bibitem{saha2022costs}
Aakash and I.~Saha, ``It costs to get costs! a heuristic-based scalable goal assignment algorithm for multi-robot systems,'' in \emph{ICAPS}, vol.~32, 2022, pp. 2--10.

\bibitem{turpin2013trajectory}
M.~Turpin, N.~Michael, and V.~Kumar, ``Trajectory planning and assignment in multirobot systems,'' in \emph{Algorithmic Foundations of Robotics}, 2013, pp. 175--190.

\bibitem{HennesCMT12}
D.~Hennes, D.~Claes, W.~Meeussen, and K.~Tuyls, ``Multi-robot collision avoidance with localization uncertainty,'' in \emph{AAMAS}, 2012, pp. 147--154.

\bibitem{liu2019task}
M.~Liu, H.~Ma, J.~Li, and S.~Koenig, ``Task and path planning for multi-agent pickup and delivery,'' in \emph{AAMAS}, 2019, pp. 1152--1160.

\bibitem{Z3_Moura}
L.~M. de~Moura and N.~Bj{\o}rner, ``Z3: An efficient {SMT} solver,'' in \emph{TACAS}, 2008, pp. 337--340.

\bibitem{zhang2022multi}
H.~Zhang, J.~Chen, J.~Li, B.~C. Williams, and S.~Koenig, ``Multi-agent path finding for precedence-constrained goal sequences,'' in \emph{AAAMS}, 2022, pp. 1464--1472.

\bibitem{HartNR68}
P.~E. Hart, N.~J. Nilsson, and B.~Raphael, ``A formal basis for the heuristic determination of minimum cost paths,'' \emph{IEEE Transactions on Systems Science and Cybernetics}, vol.~4, no.~2, pp. 100--107, 1968.

\bibitem{grenouilleau2019multi}
F.~Grenouilleau, W.-J. van Hoeve, and J.~N. Hooker, ``A multi-label {A*} algorithm for multi-agent pathfinding,'' in \emph{ICAPS}, vol.~29, 2019, pp. 181--185.

\bibitem{scala2020subgoaling}
E.~Scala, P.~Haslum, S.~Thi{\'e}baux, and M.~Ramirez, ``Subgoaling techniques for satisficing and optimal numeric planning,'' \emph{JAIR}, vol.~68, pp. 691--752, 2020.

\bibitem{helsgaun2017extension}
K.~Helsgaun, ``An extension of the lin-kernighan-helsgaun tsp solver for constrained traveling salesman and vehicle routing problems,'' \emph{Roskilde: Roskilde University}, vol.~12, 2017.

\bibitem{Kovacs12}
D.~L. Kovacs, ``A multi-agent extension of {PDDL3.1},'' in \emph{ICAPS}, 2012, pp. 19--27.

\bibitem{LeofanteANLT17}
F.~Leofante, E.~{\'{A}}brah{\'{a}}m, T.~Niemueller, G.~Lakemeyer, and A.~Tacchella, ``On the synthesis of guaranteed-quality plans for robot fleets in logistics scenarios via optimization modulo theories,'' in \emph{IEEE IRI}, 2017, pp. 403--410.

\bibitem{WagnerC11}
G.~{Wagner} and H.~{Choset}, ``M*: A complete multirobot path planning algorithm with performance bounds,'' in \emph{IROS}, 2011, pp. 3260--3267.

\bibitem{sharon2015conflict}
G.~Sharon, R.~Stern, A.~Felner, and N.~R. Sturtevant, ``Conflict-based search for optimal multi-agent pathfinding,'' \emph{Artif. Intell.}, vol. 219, pp. 40--66, 2015.

\bibitem{SahaR0PS16}
I.~Saha, R.~Ramaithitima, V.~Kumar, G.~J. Pappas, and S.~A. Seshia, ``Implan: Scalable incremental motion planning for multi-robot systems,'' in \emph{ICCPS}, 2016, pp. 43:1--43:10.

\bibitem{TurpinMK14}
M.~Turpin, N.~Michael, and V.~Kumar, ``Capt: Concurrent assignment and planning of trajectories for multiple robots,'' \emph{I. J. Robotics Res.}, vol.~33, no.~1, pp. 98--112, 2014.

\bibitem{TurpinMMK14}
M.~Turpin, K.~Mohta, N.~Michael, and V.~Kumar, ``Goal assignment and trajectory planning for large teams of interchangeable robots,'' \emph{Auton. Robots}, vol.~37, no.~4, pp. 401--415, 2014.

\bibitem{MaK16}
H.~Ma and S.~Koenig, ``Optimal target assignment and path finding for teams of agents,'' in \emph{AAMAS}, 2016, pp. 1144--1152.

\bibitem{brown2020optimal}
K.~Brown, O.~Peltzer, M.~A. Sehr, M.~Schwager, and M.~J. Kochenderfer, ``Optimal sequential task assignment and path finding for multi-agent robotic assembly planning,'' in \emph{ICRA}, 2020, pp. 441--447.

\bibitem{DBLP:conf/aaai/MaTSKK16}
H.~Ma, C.~A. Tovey, G.~Sharon, T.~K.~S. Kumar, and S.~Koenig, ``Multi-agent path finding with payload transfers and the package-exchange robot-routing problem,'' in \emph{AAAI}, 2016, pp. 3166--3173.

\bibitem{DBLP:conf/ijcai/0001WFLKK18}
H.~Ma, G.~Wagner, A.~Felner, J.~Li, T.~K.~S. Kumar, and S.~Koenig, ``Multi-agent path finding with deadlines,'' in \emph{IJCAI}, 2018, pp. 417--423.

\bibitem{UlusoySDBR13}
A.~Ulusoy, S.~L. Smith, X.~C. Ding, C.~Belta, and D.~Rus, ``Optimality and robustness in multi-robot path planning with temporal logic constraints,'' \emph{I. J. Robotics Res.}, vol.~32, no.~8, pp. 889--911, 2013.

\bibitem{KantarosZ20}
Y.~Kantaros and M.~M. Zavlanos, ``Stylus\({}^{\mbox{*}}\): {A} temporal logic optimal control synthesis algorithm for large-scale multi-robot systems,'' \emph{Int. J. Robotics Res.}, vol.~39, no.~7, 2020.

\bibitem{GujarathiS22}
D.~Gujarathi and I.~Saha, ``{MT*:} multi-robot path planning for temporal logic specifications,'' in \emph{IROS}, 2022, pp. 13\,692--13\,699.

\bibitem{Michal15}
M.~\v{C}\'{a}p, J.~Vok\v{r}\'{\i}nek, and A.~Kleiner, ``Complete decentralized method for on-line multi-robot trajectory planning in well-formed infrastructures,'' in \emph{ICAPS}, 2015, pp. 324--332.

\bibitem{MaLKK17}
H.~Ma, J.~Li, T.~K.~S. Kumar, and S.~Koenig, ``Lifelong multi-agent path finding for online pickup and delivery tasks,'' in \emph{AAMAS}, 2017, pp. 837--845.

\bibitem{xu2022multi}
Q.~Xu, J.~Li, S.~Koenig, and H.~Ma, ``Multi-goal multi-agent pickup and delivery,'' in \emph{IROS}, 2022, pp. 9964--9971.

\bibitem{chen2021integrated}
Z.~Chen, J.~Alonso-Mora, X.~Bai, D.~D. Harabor, and P.~J. Stuckey, ``Integrated task assignment and path planning for capacitated multi-agent pickup and delivery,'' \emph{IEEE Robotics and Automation Letters}, vol.~6, no.~3, pp. 5816--5823, 2021.

\end{thebibliography}

\end{document}